\def\BibTeX{{\rm B\kern-.05em{\sc i\kern-.025em b}\kern-.08em
    T\kern-.1667em\lower.7ex\hbox{E}\kern-.125emX}}
\newtheorem{theorem}{Theorem}[section]
\newtheorem{lemma}[theorem]{Lemma}
\newtheorem{assumption}[theorem]{Assumption}
\newtheorem{remark}[theorem]{Remark}
\begin{document}

\newcommand{\regret}{R}
\newcommand{\horizon}{n}
\newcommand{\horizonn}{n_1}
\newcommand{\horizonnn}{n_2}
\newcommand{\reward}{X}
\newcommand{\vecreward}{\Bar{X}}
\newcommand{\actset}{\mathcal{A}}
\newcommand{\Action}{A}
\newcommand{\action}{a}
\newcommand{\timeidx}{t}
\newcommand{\timeidxx}{s}
\newcommand{\totalreward}{S}
\newcommand{\policy}{\pi}
\newcommand{\dumnpolicy}{\pi^{\prime}}
\newcommand{\armscount}{N}
\newcommand{\rewardof}{r}
\newcommand{\numofactions}{K}
\newcommand{\armval}{\mu}
\newcommand{\bestarm}{\mu^{\ast}}
\newcommand{\bestarmidx}{a^{\ast}}
\newcommand{\suboptgap}{\Delta}
\newcommand{\numbatches}{M}
\newcommand{\batchsize}{b}
\newcommand{\grid}{\mathcal{T}}
\newcommand{\batchidx}{j}
\newcommand{\batchidxoftimeidx}{\batchidx(\timeidx)}
\newcommand{\batchpolicy}{\policy^{\batchsize}}
\newcommand{\history}{H}
\newcommand{\realnumbers}{\mathbb{R}}
\newcommand{\historyset}{\mathcal{H}}
\newcommand{\setofdist}[1]{\mathcal{M}_1{#1}}
\newcommand{\probmeasure}{\mathbb{P}}
\newcommand{\env}{\nu}
\newcommand{\avpolicy}[1]{\Bar{\policy}_#1}
\newcommand{\avbatchpolicy}[1]{\Bar{\policy}^{\batchsize}_#1}
\newcommand{\avdumnpolicy}[1]{\Bar{\policy}^{\prime}_#1}
\newcommand{\norm}[1]{\left\lVert#1\right\rVert}

\title{The Impact of Batch Learning in \\ Stochastic Linear Bandits \\
%{\footnotesize \textsuperscript{*}Note: Sub-titles are not captured in Xplore and should not be used}
%\thanks{Identify applicable funding agency here. If none, delete this.}
}

\author{\IEEEauthorblockN{Danil Provodin\textsuperscript{1,2}, Pratik Gajane\textsuperscript{1}, Mykola Pechenizkiy\textsuperscript{1,4}, Maurits Kaptein\textsuperscript{2,3}}
\IEEEauthorblockA{
\textsuperscript{1}\textit{Eindhoven University of Technology} Eindhoven, The Netherlands\\
\textsuperscript{2}\textit{Jheronimus Academy of Data Science}, ‘s-Hertogenbosch, The Netherlands\\
\textsuperscript{3}\textit{Tilburg University}, Tilburg, The Netherlands\\
\textsuperscript{4}\textit{University of Jyväskylä}, Jyväskylä, Finland\\
\texttt{ \{d.provodin,p.gajane,m.pechenizkiy\}@tue.nl, M.C.Kaptein@tilburguniversity.edu}}
\iffalse
\and
\IEEEauthorblockN{2\textsuperscript{nd} Anonymous Author}
\IEEEauthorblockA{\textit{dept. name of organization (of Aff.)} \\
\textit{name of organization (of Aff.)}\\
City, Country \\
email address or ORCID}
\and
\IEEEauthorblockN{3\textsuperscript{rd} Anonymous Author}
\IEEEauthorblockA{\textit{dept. name of organization (of Aff.)} \\
\textit{name of organization (of Aff.)}\\
City, Country \\
email address or ORCID}

\and
\IEEEauthorblockN{4\textsuperscript{th} Given Name Surname}
\IEEEauthorblockA{\textit{dept. name of organization (of Aff.)} \\
\textit{name of organization (of Aff.)}\\
City, Country \\
email address or ORCID}
\and
\IEEEauthorblockN{5\textsuperscript{th} Given Name Surname}
\IEEEauthorblockA{\textit{dept. name of organization (of Aff.)} \\
\textit{name of organization (of Aff.)}\\
City, Country \\
email address or ORCID}
\and
\IEEEauthorblockN{6\textsuperscript{th} Given Name Surname}
\IEEEauthorblockA{\textit{dept. name of organization (of Aff.)} \\
\textit{name of organization (of Aff.)}\\
City, Country \\
email address or ORCID}
\fi
}

\maketitle

\begin{abstract}
We consider a special case of bandit problems, named batched bandits, in which an agent observes batches of responses over a certain time period. Unlike previous work, we consider a more practically relevant batch-centric scenario of batch learning. That is to say, we provide a policy-agnostic regret analysis and demonstrate upper and lower bounds for the regret of a candidate policy. Our main theoretical results show that the impact of batch learning is a multiplicative factor of batch size relative to the regret of online behavior. Primarily, we study two settings of the stochastic linear bandits: bandits with finitely and infinitely many arms. While the regret bounds are the same for both settings, the former setting results hold under milder assumptions. Also, we provide a more robust result for the 2-armed bandit problem as an important insight. Finally, we demonstrate the consistency of theoretical results by conducting empirical experiments and reflect on optimal batch size choice.
\end{abstract}

\begin{IEEEkeywords}
batch learning, linear bandits
\end{IEEEkeywords}

\section{Introduction}
%\subsection{Background and motivation.}
The stochastic bandit problem is one of the central topics of modern literature on sequential decision making, which aims to determine policies that maximize the expected reward. These policies are often learned either \textit{online} (sequentially) (see, e.g., \citep{LinUCB_2010, NIPS2017_28dd2c79, dimakopoulou2018estimation}) or \textit{offline} (statically) (see, e.g., \citep{swaminathan15a, zhou2018offline, joachims2018deep, athey2020policy}. In online problems, the agent learns through sequential interaction with the environment, adjusting the behavior for every single response. In offline learning, on the other hand, the agent learns from fixed historical data without the possibility to interact with the environment. Therefore, the agent's goal is to maximally exploit the static data to determine the best policy. However, in many application domains, batched feedback is an intrinsic characteristic of the problem, and neither setting provides a close approximation of the underlying reality \citep{marketing_DP, NIPS2011_e53a0a29, marketing_bandit, Hill_2017}. While the offline setting is not conducive to sequential learning, online learning is often curtailed by the limitations of practical applications. For example, in recommender systems and ad placement engines, treating users one at a time can become a formidable computational and/or engineering burden; in online marketing and clinical trials, environments design (campaigns/trials) and the presence of delayed feedback result in treating patients/customers organized into groups. In all of these applied cases, it is infeasible to learn truly one by one due to computational complexity or the impact of delay. In such cases, an online approach that acts on groups of observations is much more appealing from a practical point of view.

Because of the practical restrictions described above, we consider \textit{sequential batch learning} in bandit problems -- sequential interaction with the environment when responses are grouped in batches and observed by the agent only at the end of each batch. Generally speaking, sequential batch learning is a more generalized way of learning which covers both offline and online settings as special cases bringing together their advantages:
\begin{itemize}
    \item  Unlike offline learning, sequential batch learning retains the sequential nature of the problem.
    \item Unlike online learning, it is often appealing to implement batch learning in large-scale bandit problems as
        \begin{itemize}
            \item it does not require much engineering effort and resources, as experimental control over the system is unessential; and
            \item it does not need resources to shorten the feedback loop in time-delayed bandit problems.
        \end{itemize}
\end{itemize}

Unfortunately, a comprehensive understanding of the effects of the batch setting is still missing. Withdrawing the assumptions of online learning that have dominated much of the bandit literature raises fundamental questions as how to benchmark performance of candidate policies, and how one should choose the batch size for a given policy in order to achieve the rate-optimal regret bounds. As a consequence, it is now frequently a case in practice when the batch size is chosen for the purpose of computational accessibility rather than statistical evidence \citep{NIPS2011_e53a0a29, Hill_2017}. Moreover, while the asymptotics of batched policies is known (see, e.g., \citep{Auer_2010, cesabianchi2013online}),
%and not particularly interesting for us, 
the relatively small horizon performance of batch policies is much less understood while simultaneously being much more practically relevant. Thus, in this work, we make a significant step in these directions by providing a systematic study of the sequential batch learning problem.

In this work, we focus on the batch learning problem with respect to the batch size $b$. That is to say, we provide upper and lower regret bounds of a batch policy relative to its online behavior as a function of $b$; specifically, we establish that the impact of the batch learning is a multiplicative factor of batch size. The distinctive feature of our work, relative to previous batch learning literature, is that we provide a policy-agnostic analysis, which holds for a certain set of policies.

\noindent
In summary, this paper makes the following contributions \footnote{Some preliminary results have been presented at NeurIPS 2021 Workshop on Ecological Theory of Reinforcement Learning \citep{provodin2021impact}.}.:

\begin{enumerate}
    \item we formulate a more practically relevant batch-centric problem (Section \ref{sec:prob_formulation}) and establish upper and lower bounds on the performance for an arbitrary candidate policy (Section \ref{sec:warm-up});
    
    \item we demonstrate the validity of the theoretical results experimentally and reflect on the optimal batch size choice (Section \ref{empirical});
    
    \item we provide insight and guidance to the development of novel batch policies (Section \ref{sec:conclusion}). %: while the most of the batch bandit literature focuses on deterministic policies, it might be more beneficial to exploit randomized policies instead.
\end{enumerate}

\subsection{Related work}

Our setting lies in the intersection of batched bandits and bandits with delayed feedback. The origin of the batched bandit constituent can be traced back to \cite{Perchet_2016}, which proposes an explicit batched algorithm based on explore-then-commit policy for a two-armed batch bandit problem and explore its upper and lower regret bounds, giving rise to a rich line of work \cite{gao2019batched, han2020sequential, esfandiari2020regret, batchedTS}. However, regret bounds in prior batched bandit literature are incomparable to ours because those are with respect to the number of batches $M$, whereas we consider regret bounds with respect to batch size $b$.

The problem of batched bandits also relates to learning from delayed feedback (see, e.g., \citep{joulani2013online, vernade2017stochastic, pikeburke2018bandits, Zhou_context_delay_2019, Vernade2020}). The delayed setting focuses on similar regret bounds as we do in our study -- with respect to the size of delay $d$ (which is comparable with the batch size $b$). The closest works to ours are \cite{joulani2013online, Zhou_context_delay_2019, Vernade2020}. Similar to our work, reference \cite{joulani2013online} provides a policy-agnostic analysis of stochastic bandits. Specifically, they establish that delay increases regret in an additive way in stochastic problems. However, we provide a more general analysis of stochastic linear bandits in our study. References \cite{Zhou_context_delay_2019, Vernade2020} show that in linear bandits, the increase in regret due to rewards being delayed is a multiplicative $\sqrt{d}$ and $d$ factor, respectively. From the mathematical point of view, the main difference with our approach is that they propose a specific policy and analyze it, whereas we provide a more general policy-agnostic analysis. In that sense, their problem is easier than ours because policy behavior is completely known; yet, we make stronger assumptions on the batch size $b$.

\iffalse
The rest of the paper is organized as follows: The problem of sequential batch learning 
%and related assumptions are 
is defined in Section \ref{sec:prob_formulation}. The 2-armed problem, instance-independent and instance-dependent analyses are given in Sections \ref{sec:warm-up}, \ref{sec:inst_indep}, and \ref{sec:inst_dep}  respectively. In Section \ref{empirical}, we provide empirical results and finally, we conclude with Section \ref{sec:conclusion}. %We conclude with a discussion and directions for further research in Section \ref{sec:conclusion}.
\fi

\section{Problem formulation}
\label{sec:prob_formulation}
We consider a general model of \textit{sequential batch learning} in which, unlike classical online learning, we assume that the responses are grouped in batches and observed only at the end of each batch. For building up the formal notion of the batch learning setting in Section \ref{sec:batch}, we first introduce the basic notions of the classical setting in Section \ref{sec:lin_bandits}.

\subsection{Decision Procedure and Reward Structures}
\label{sec:lin_bandits}

We consider a general setting of the stochastic linear bandit problem. In this problem, the decision-maker (agent) has to make a sequence of decisions, possibly based on some side information, and for each decision it incurs a stochastic, although not necessarily immediately observed, reward. \footnote{We borrow notations and problem formulation for online setting from \cite{lattimore_szepesvari_2020}.} More formally, given a decision set $\actset_{\timeidx} \subset \mathbb{R}^d$ (for $d<\infty$) and a time horizon $\horizon < \infty$, at each time step $\timeidx \in \{1,2, \dots, \horizon\}$, the agent chooses an action $\Action_{\timeidx} \in \actset_{\timeidx}$  and reveals reward 
\begin{equation*}
    \reward_{\timeidx} = \langle \theta_*, \Action_{\timeidx} \rangle + \eta_{\timeidx}
\end{equation*}
where $\eta_{\timeidx}$ is the noise, and $\theta_*$ is the instance unknown for the agent. For simplicity, we assume each $\eta_{\timeidx}$ is 1-subGaussian: $\mathbb{E} [e^{\lambda \eta_{\timeidx}}] \leq e^{\lambda^2 / 2}, \forall \timeidx \in [n], \forall \lambda \in \mathbb{R}$. Although restrictive, this is a classical assumption for bandit literature, as it covers most reasonable noise distributions. Further, without loss of generality (since $d <  \infty$), we assume $\norm{\theta_*}_2 \leq 1$.

The goal of the agent is to maximize the total reward $\totalreward_{\horizon} = \sum_{\timeidx=1}^{\horizon} \reward_{\timeidx}$. To assess the performance of a policy $\policy$, we consider regret -- difference between the agent's total reward and cumulative reward obtained by an optimal policy:

\begin{equation*}
    \regret_\horizon (\policy) = \mathbb{E} \left [ \sum_{\timeidx=1}^{\horizon} \max_{\action \in \actset_{\timeidx}} \langle \theta_*, \action \rangle - \totalreward_{\horizon} \right ].
\end{equation*}

The generality of this setting lies in the choice of the decision set $\actset_{\timeidx}$: different choices of $\actset_{\timeidx}$ lead to different settings.

\paragraph{Stochastic bandit.} If $(e)_i$ are the unit vectors and $\actset_{\timeidx} = \{e_1,...,e_\numofactions \}$ then the resulting stochastic linear bandit problem reduces to the finite-armed stochastic setting. We can think of the problem as an interaction between the agent and environment $\env = (P_\action : \action \in \actset)$, where $\actset:=\{1, ..., \numofactions \}$ and $P_\action$ is the distribution of rewards for action $\action$. At each time step $\timeidx$, the agent chooses an action $\Action_{\timeidx} \in \actset$ and receives a reward $\reward_{\timeidx} \sim P_{\Action_\timeidx}$. In this case $\theta_* = (\armval_\action )_{\action}$, where $\armval_\action = \int_{\realnumbers} x d P_\action(x)$ is the expected reward of action $\action \in \actset$, and regret is the difference between the agents' total reward after $\horizon$ rounds and the best reward possible given a strategy of choosing single action:

\begin{equation}
\label{eq:regret definition}
    \regret_\horizon (\policy) = \horizon \max_{\action \in \actset} \armval_\action - \mathbb{E} \big [ \totalreward_{\horizon} \big ].
\end{equation}

\paragraph{Contextual linear bandit.}

If $\actset_{\timeidx} = \{ \psi(C_t, i): i \in [K] \}$, where $\psi: \mathcal{C} \times [K] \xrightarrow{} \mathbb{R}$ is a feature map and $C_t \in \mathcal{C}$ is a context observed by the agent at timestep $\timeidx$, then we have a contextual linear bandit. To illustrate how the above quantities arise in practice we give an example. Suppose, at the beginning of round $\timeidx$, a customer characterized by context $C_t$ visits a website selling books. When the agent applies action $\Action_\timeidx$ (recommends a book) to the customer, a reward $r_t(C_t, \Action_\timeidx)$ is obtained. The contextual linear bandit settings assumes that the expected reward $\mathbb{E}[r_t(c, \action)]$ is a parametrized mean reward function $g_{\theta}(\cdot)$ of feature vector $\psi(\cdot, \cdot)$, formally, $\mathbb{E}[r_t(c, \action) \text{ } | \text{ } C_t, \Action_\timeidx] = g_{\theta}(\psi(C_t, \Action_\timeidx))$. Note that the linear contextual bandits setting then corresponds to $g_{\theta}(\cdot) = \langle \theta, \cdot \rangle$.

% For simplicity, we assume that the decision set $\actset_{\timeidx} \subset \mathbb{R}^d$  does not change with time, i.e., $\actset_{\timeidx} = \actset$ and $|\actset|=K$ for some natural number $K$. Note that it immediately follows from the fact that $|\actset_{\timeidx}| < \infty$, as one can just assign $\actset = \bigcup_{\timeidx} \actset_{\timeidx}$. 
% Also, we define suboptimality gaps $\suboptgap_{\action} = \max_{b \in \actset} \langle \theta_*, b - \action \rangle$ for all $\action \in \actset$.

\subsection{Sequential batch learning}
\label{sec:batch}
In the standard online learning setting, the decision maker immediately observes the reward $\reward_\timeidx$ after selecting action $\Action_\timeidx$ at time step $\timeidx$.  Consequently, in selecting $\Action_{\timeidx+1}$, the decision maker can base his decision on the current decision set $\actset_{\timeidx+1}$ and past history. Here, the past at time step $\timeidx$ is defined as

\begin{equation*}
    \history_{\timeidx} = (\Action_1, \reward_1, ... , \Action_{\timeidx}, \reward_{\timeidx}) \in %\left ( \actset_1 \times ... \times \actset_{\timeidx} \right ) \realnumbers^{\timeidx} \equiv \historyset_{\timeidx}
    \realnumbers^{(d+1)\timeidx} \equiv \historyset_{\timeidx}
\end{equation*}
which is the sequence of action-reward pairs leading up to the state of the process at the previous time step $\timeidx$. Note that $\history_0 = \emptyset$. Let $\setofdist(X)$ be a set of all probability distributions over a set $X$. As such, a policy is a finite sequence $\policy = (\policy_{\timeidx})_{1 \leq \timeidx \leq \horizon}$ of maps of histories to distributions over actions (decision rules), formally, $\policy_\timeidx : \historyset_{\timeidx-1} \xrightarrow{} \setofdist(\actset_{\timeidx})$. Intuitively, following a policy $\policy$ means that in timestep $\timeidx$, the distribution of the action $\Action_\timeidx$ to be chosen for that timestep is $\policy_{\timeidx}(\history_{\timeidx-1})$: the probability that $\Action_{\timeidx} = a$ is $\policy_{\timeidx}(\action | \history_{\timeidx-1})$. Thus, when following a policy $\policy$, in timestep $\timeidx$ we get that

\begin{equation*}
    \probmeasure(\Action_{\timeidx} = \action | \history_{\timeidx-1}) = \policy_{\timeidx}(\action | \history_{\timeidx-1}).
\end{equation*}

In contrast to conventional approaches that require the reward to be observable after each choice of the action, our setting assumes only that rewards are released at specific predefined timesteps. Denote by $\grid = \timeidx_1,...,\timeidx_{\numbatches}$ a grid, which is a division of the time horizon $\horizon$ to $\numbatches$ batches of equal size $\batchsize$,  $1 = \timeidx_1 < ... < \timeidx_{\numbatches} = \horizon, \timeidx_{\batchidx} - \timeidx_{\batchidx-1} = \batchsize$ for all $\batchidx=1,..., \numbatches$. Without loss of generality we assume that $\horizon=\batchsize \numbatches$, otherwise we can take $\horizon:= \left\lfloor \frac{\horizon}{\batchsize} \right\rfloor \batchsize $. Recall that in the batch setting the agent receives the rewards after each batch ends, meaning that the agent operates with the same amount of information within a single batch. For simplicity, we assume that as long as the history remains the same the decision rule does not change as well. Note that this assumption does not impose any significant restrictions. Indeed, instead of applying a policy once, one can always do it $\batchsize$ times until the history updates. Thus, a batch policy is also a finite sequence of $\policy = (\policy_{\timeidx})_{1 \leq \timeidx \leq \horizon}$ of decision rules: $\policy_\timeidx : \historyset_{\timeidx-1} \xrightarrow{} \setofdist(\actset_{\timeidx})$. However, not the whole past history is available for the agent in timestep $\timeidx$, formally, $\history_\timeidx = \history_{\timeidx_{\batchidx}}$ for any $t_j < t \leq t_{j+1}$. 

In practice, sequential batch learning is usually considered as a limitation of the environment. However, for notation convenience, we consider this limitation from a policy perspective, i.e., we assume that it is not the online agent who works with the batch environment, but the batch policy interacts with the online environment. To distinguish between online and batch policies we will denote the last as $\batchpolicy = (\batchpolicy_{\timeidx})_{1 \leq \timeidx \leq \horizon}$.

We now are ready to formulate the goal of the paper formally. That is, given an arbitrary policy $\policy$, we aim to establish upper and lower regret bounds of its batch specification $\batchpolicy$.

\subsection{Preliminaries}
\label{sec:preliminaries}

%Before proceeding, we will need to require some properties of policy $\policy$ to distinguish between "good" and "bad" policies. 
Before proceeding, we will need to distinguish between ``good" and ``bad" policies on the basis of some properties.  
Accordingly, we first define a binary relation on a set of policies. We say that the decision rule  $\policy_\timeidx = \policy_\timeidx(\cdot|\history_{\timeidx-1})$ is not worse than the decision rule $\policy_\timeidx^{\prime} = \policy_\timeidx^{\prime}(\cdot|\history_{\timeidx-1})$ (and write $\policy_\timeidx \geq \policy_\timeidx^{\prime}$) if the expected reward under $\policy_\timeidx$ is not less than the expected reward under $\policy_\timeidx^{\prime}$:

\begin{equation}
    \label{better policy}
    \sum_{\action \in \actset_{\timeidx}} \langle \theta_*, \action \rangle \policy_\timeidx(\action|\history_{\timeidx-1}) \geq \sum_{\action \in \actset_{\timeidx}} \langle \theta_*, \action \rangle \policy_\timeidx^{\prime}(\action|\history_{\timeidx-1}).
\end{equation}
If $\geq$ can be replaced by $>$, we say that the decision rule $\policy_\timeidx$ is better than the decision rule $\policy_\timeidx^{\prime}$ (and write $\policy_\timeidx > \policy_\timeidx^{\prime}$).

%Define the informativeness of a history by conditional variance of empirical estimate of the true rewards. Formally, let $\hat{\theta}$ be some estimate of $\theta_*$ derived from history $\history_{\timeidx-1}$ and $(\hat{\armval}_{\action, \timeidx})_{\action \in \actset_{\timeidx}}$ the corresponding estimated rewards, $\hat{\armval}_{\action, \timeidx} = \mathbb{E}( \langle \hat{\theta}, \action \rangle | \history_{\timeidx-1})$. Denote by $\sigma_\timeidx^2 = (\sigma_{\action, \timeidx}^2)_{\action \in \actset_{\timeidx}}$ the vector of conditional variances of estimated rewards $(\hat{\armval}_{\action, \timeidx})_{\action \in \actset_{\timeidx}}$ given the history $\history_{\timeidx-1}$:  $\sigma_{\action, \timeidx}^2 = Var (\hat{\armval}_{\action, \timeidx} | \history_{\timeidx-1})$. Then, we require that the decision rule based on a more informative history is at least as good as the decision rule based on a less informative history:

Define the informativeness of history by the number of times the optimal actions were chosen in it. Formally, let $T_{\bestarmidx}(\policy, \timeidx) = \sum_{\timeidxx=1}^{\timeidx} \mathbb{I} \{\Action_\timeidx = \Action_\timeidx^*\}$ be the number of times policy $\policy$ made optimal decisions in history $\history_{\timeidx}$. Then, we require that the decision rule based on a more informative history is at least as good as the decision rule based on a less informative history:

\begin{assumption}[Informativeness]
\label{variance_contractions}
    Let $T_{\bestarmidx}(\policy, \timeidx)$ and $T^{\prime}_{\bestarmidx}(\policy, \timeidx)$ be numbers of times the optimal actions were chosen in histories $\history_{\timeidx}$ and $\history^{\prime}_{\timeidx}$, correspondingly. If  $T_{\bestarmidx}(t) \geq T^{\prime}_{\bestarmidx}(t)$, then $\policy_{\timeidx+1}(\cdot|\history_{\timeidx}) \geq \policy_{\timeidx+1}(\cdot|\history^{\prime}_{\timeidx})$.
\end{assumption}

Next, we assume that policy $\policy = (\policy_{\timeidx})_{1 \leq \timeidx \leq \horizon}$ improves over time if the ``rate" of increasing of the regret decreases.

\begin{assumption}[Subliniarity]
\label{policy_improvement}
    $\frac{\regret_{\horizonn}(\policy)}{\horizonn} > \frac{\regret_{\horizonnn}(\policy)}{\horizonnn}$ for all $\horizonn, \horizonnn$, $1 \leq \horizonn < \horizonnn \leq \horizon$.
\end{assumption}

Finally, we impose a monotonic lower bound on the probability of choosing the optimal action at timestep $\timeidx$. Here we consider two assumptions: instance-independent monotonicity and instance-dependent. That is, in the former case, we assume the existence of the lower bound $f(\timeidx)$, no matter what instance $\theta$ is given; while in the latter case -- the existence of the lower bound depends on a specific instance of actions $\theta$: $f:=f_{\theta}$. Since instance-independent monotonicity requires a bound for every possible instance $\theta$, it is a stricter assumption.

\begin{assumption}[Instance-independent monotonicity]
    \label{monotonicity}
    For any suboptimal action $\action$, $\action \in \actset_{\timeidx}$ and $\action \notin \arg \max_{\actset_{\timeidx}} \langle \theta_*, \action \rangle$, there exists a function $f_{\action}:[0, \infty) \xrightarrow{} [0, 1] $ such that: (i) $f_{\action}$ is nonincreasing; (ii) $\policy_\timeidx(\action|\history_{\timeidx-1}) \leq f_{\action}(\timeidx)$ for all $\timeidx > 0$; and (iii) $f(\timeidx) := 1 - \sum_{\action} f_{\action}(\timeidx)$ is a strictly increasing function for all $ \timeidx > 0$ unless $f(t)=1$. \footnote{Note that from (ii) and (iii) it follows that for all $\timeidx$, $\policy_\timeidx(\bestarmidx|\history_{\timeidx-1}) \geq f(\timeidx)$.}
\end{assumption}

\begin{assumption}[Instance-dependent monotonicity]
    \label{instance_dependent_monotonicity}
    %{\color{red}There exists a function $f_{\theta}:[0, \infty) \xrightarrow{} [0, 1] $ such that: (i) $f_{\theta}$ is nondecreasing, $f_{\theta}(0) = 0$; (ii) there exists a timestep $\timeidx_{cr}$, such that $f_{\theta}(t)$ is a strictly increasing $\forall \timeidx > \timeidx_{cr}$; and (iii) for all $\timeidx$, $\policy_\timeidx(\bestarmidx|\history_{\timeidx-1}) \geq f_{\theta}(\timeidx)$.}
    For any suboptimal action $\action$, $\action \in \actset_{\timeidx}$ and $\action \notin \arg \max_{\actset_{\timeidx}} \langle \theta_*, \action \rangle$, there exists a function $f_{\theta, \action}:[0, \infty) \xrightarrow{} [0, 1] $, depending on $\theta$, such that: (i) $f_{\theta, \action}$ is nonincreasing; (ii) $\policy_\timeidx(\action|\history_{\timeidx-1}) \leq f_{\theta, \action}(\timeidx)$ for all $\timeidx > 0$; (iii) $f_{\theta}(t) := 1 - \sum_{\action} f_{\theta, \action}(\timeidx)$ is a strictly increasing function for all $ \timeidx > 0$ unless $f_{\theta}(t)=1$; and (iv) $f_{\theta}(t)$ is nondecreasing in its instance argument in the following sense: $f_{\theta_1}(t) < f_{\theta_2}(t)$ for all $t>0$ if $\min_{\action \in \actset_{\timeidx}} \suboptgap_{\theta_1}(\action) < \min_{\action \in \actset_{\timeidx}} \suboptgap_{\theta_2}(\action)$, where $\suboptgap_{\theta}(\action) = \max_{b \in \actset_{\timeidx}} \langle \theta, b - \action \rangle$ is suboptimality gap.
    %$\langle \theta_1, \action \rangle < \langle \theta_2, \action \rangle$ for all suboptimal action $\action$.
\end{assumption}

The validity discussion of the imposed assumptions can be found in Section \ref{app:assum2.1} - \ref{app:assum2.4}. 

%To conclude this section, we provide some properties of a policy with sublinear regret that we will use further. The proof of the following lemma is presented in Appendix \ref{sec:lemma3.5}.

\section{Batch learning for stochastic linear bandits}
\label{sec:warm-up}
%We start with a warm-up analysis of 2-armed stochastic linear bandits and provide lower and upper bounds on the best achievable performance. The reason is that the case of $\numofactions>2$ introduces a new subtlety, and it is convenient to postpone discussion of it to the next section. 

%Intuitively, if policy $\policy$ is good enough, it should be better than policy $\batchpolicy$ because it makes its decisions (``good decisions") having the maximum available information about the environment. Nevertheless, batch policy $\batchpolicy$ is a batch specification of ``good" policy $\policy$, and and we expect it to perform relatively well. The following theorem formalizes the described above intuition. 
In this section, we provide lower and upper bounds on the best achievable performance for different linear bandit settings, namely: 2-armed bandits, bandits with finitely many arms, and bandits with infinitely many arms.

\subsection{Stochastic linear bandits with 2 arms}

We start with a more restricted analysis of 2-armed problem because, first, it allows to derive a stronger result, and, second, it gives insight into the analysis of more difficult settings.

\begin{theorem}
\label{thm1}
    Let $\pi^b$ be a batch specification of a given policy $\pi$, $\numofactions=2$, and $\numbatches = \frac{\horizon}{\batchsize}$. Suppose that assumptions \ref{variance_contractions} and \ref{policy_improvement} hold. Then, for $b>1$,
    
    \begin{equation}
    \label{main_thm}
        \regret_\horizon(\policy) < \regret_\horizon(\batchpolicy) \leq \batchsize \regret_\numbatches(\policy).
    \end{equation}
\end{theorem}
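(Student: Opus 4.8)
The plan is to establish the two inequalities in \eqref{main_thm} separately, both by comparing the per-batch behavior of $\batchpolicy$ with the per-step behavior of $\policy$ through the informativeness of the history. Throughout, write $T_{\bestarmidx}(\policy,\timeidx)$ for the (random) number of optimal pulls up to time $\timeidx$ and note that with $\numofactions=2$ the instantaneous regret of any decision rule is $\suboptgap$ times the probability it assigns to the suboptimal arm, where $\suboptgap = \langle\theta_*,\bestarmidx\rangle - \langle\theta_*,a\rangle$ is the single suboptimality gap. Thus $\regret_\horizon(\policy) = \suboptgap\sum_{\timeidx=1}^{\horizon}\mathbb{E}[\policy_\timeidx(a\mid\history_{\timeidx-1})]$ and similarly for $\batchpolicy$, so it suffices to compare the corresponding sums of expected suboptimal-arm probabilities.

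\emph{Upper bound $\regret_\horizon(\batchpolicy)\le\batchsize\,\regret_\numbatches(\policy)$.} Within batch $\batchidx$ (time steps $\timeidx_\batchidx<\timeidx\le\timeidx_{\batchidx+1}$), the batch policy repeats the decision rule $\policy_{\timeidx_\batchidx+1}(\cdot\mid\history_{\timeidx_\batchidx})$ a total of $\batchsize$ times, so the contribution of that batch to $\regret_\horizon(\batchpolicy)$ is $\batchsize\suboptgap\,\mathbb{E}[\policy_{\timeidx_\batchidx+1}(a\mid\history_{\timeidx_\batchidx})]$. I will couple the batch run of $\batchpolicy$ with the first $\numbatches$ steps of an online run of $\policy$: the key claim is that at the start of batch $\batchidx$ the history $\history_{\timeidx_\batchidx}$ seen by $\batchpolicy$ is stochastically \emph{less} informative than the length-$(\batchidx)$ history seen by online $\policy$ after $\batchidx$ steps, i.e. $T_{\bestarmidx}$ is stochastically smaller. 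This is because $\batchpolicy$ makes only $\numbatches$ ``information updates'' in $\horizon$ steps whereas $\policy$ makes $\batchidx$ of them in $\batchidx$ steps, and Assumption~\ref{variance_contractions} (informativeness) then gives that the decision rule of $\batchpolicy$ at the start of batch $\batchidx$ is pointwise no better — hence assigns at least as much mass to the suboptimal arm — than the $\batchidx$-th decision rule of online $\policy$. Summing over $\batchidx=1,\dots,\numbatches$ and multiplying by $\batchsize$ yields exactly $\batchsize\,\regret_\numbatches(\policy)$. Establishing this stochastic-dominance/coupling step cleanly — in particular that Assumption~\ref{variance_contractions} can be applied in expectation after the coupling, using monotonicity of the relevant quantities in $T_{\bestarmidx}$ — is the main obstacle; one has to be careful that ``more optimal pulls'' propagates forward correctly under the coupling and that the single-gap structure of the $\numofactions=2$ case makes the expected-regret comparison a genuine consequence of the pointwise relation $\geq$.

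\emph{Lower bound $\regret_\horizon(\policy)<\regret_\horizon(\batchpolicy)$.} Here I would use Assumption~\ref{policy_improvement} and Lemma~\ref{lemma_propeties}. The online policy run for $\horizon$ steps incurs $\suboptgap\sum_{\timeidx=1}^\horizon\mathbb{E}[\policy_\timeidx]$, while the batch policy's total is $\suboptgap\batchsize\sum_{\batchidx=1}^{\numbatches}\mathbb{E}[\policy_{\timeidx_\batchidx+1}(\cdot\mid\history_{\timeidx_\batchidx})]$. Grouping the online sum into $\numbatches$ consecutive blocks of length $\batchsize$, the block $\batchidx$ online average decision rule is $\avpolicy{} $-type average $\Bar{\policy}$ over steps $\timeidx_\batchidx+1,\dots,\timeidx_{\batchidx+1}$; by Lemma~\ref{lemma_propeties}\ref{point2} the \emph{first} rule in a block is worse than the block average is false in the wrong direction, so instead I compare the batch rule (frozen at the block's start, based on the \emph{least} informative history in that block) with the online rules later in the block, which see strictly more information and so by Assumption~\ref{variance_contractions} are strictly better, i.e. assign strictly less mass to the suboptimal arm. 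Since $\batchsize>1$ each block contains at least one such strictly-better later step, giving a strict inequality block by block and hence overall. A small additional argument handles the very first block (where both start from the empty history): there one invokes Assumption~\ref{policy_improvement}/Lemma~\ref{lemma_propeties} to get that $\policy$'s average over the block is strictly better than its first decision rule, so even the first block contributes strictly. Combining the $\numbatches$ block inequalities gives $\regret_\horizon(\policy)<\regret_\horizon(\batchpolicy)$, completing the proof.
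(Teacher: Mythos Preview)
Your upper-bound argument has the comparison running in the wrong direction. You claim that at the start of batch $\batchidx$ the history seen by $\batchpolicy$ is stochastically \emph{less} informative than the length-$\batchidx$ history of online $\policy$, so that $\batchpolicy$'s decision rule is ``no better'' and assigns at least as much mass to the suboptimal arm. But if that were true, summing over $\batchidx=1,\dots,\numbatches$ and multiplying by $\batchsize$ would give $\regret_\horizon(\batchpolicy)\ge\batchsize\regret_\numbatches(\policy)$ --- the reverse of what you need. To obtain $\regret_\horizon(\batchpolicy)\le\batchsize\regret_\numbatches(\policy)$ you must show that $\batchpolicy$'s rule at batch $\batchidx$ is \emph{at least as good} as the $\batchidx$-th online rule, i.e.\ that the batch history is \emph{more} informative. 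The paper does this by introducing a third auxiliary policy $\dumnpolicy$ (the ``short'' online policy) whose regret is exactly $\batchsize\regret_\numbatches(\policy)$: it behaves like $\batchpolicy$ within each batch but discards all but one observation per batch when updating. Since $\batchpolicy$ retains all $\batchsize$ observations while $\dumnpolicy$ keeps only one, an induction on batches together with Assumption~\ref{variance_contractions} yields $\batchpolicy_{\timeidx_\batchidx}\ge\dumnpolicy_{\timeidx_\batchidx}$, and hence the upper bound. Your direct coupling with online $\policy$ over $\numbatches$ steps cannot deliver this: after $\batchidx-1$ batches, $\batchpolicy$ has on the order of $(\batchidx-1)\batchsize$ observations versus $\batchidx-1$ for the online run, so it is typically \emph{more} informative, not less.

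Your lower-bound sketch is closer in spirit to the paper's argument but skips the inductive glue. You compare the batch rule at the start of a block with the online rules later in that block and appeal to ``strictly more information''. But the batch rule is computed from the \emph{batch} trajectory $\history^\batchsize_{\timeidx_\batchidx-1}$, while the online rules use the \emph{online} trajectory $\history_{\timeidx-1}$; these come from different runs, and Assumption~\ref{variance_contractions} only lets you compare decision rules once you have ordered their $T_{\bestarmidx}$ counts. The paper closes this gap by an explicit induction across batches: assuming $\avpolicy{l}>\avbatchpolicy{l}$ for all $l<\batchidx$, the $\numofactions=2$ structure converts the decision-rule ordering into a $T_{\bestarmidx}$-ordering, Assumption~\ref{variance_contractions} then gives $\policy_{\timeidx_\batchidx}\ge\batchpolicy_{\timeidx_\batchidx}$ at the start of batch $\batchidx$, and Lemma~\ref{lemma_propeties} makes the within-batch average comparison strict. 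Without that between-batch step, your block-by-block inequalities do not follow.
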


%\begin{remark}
%Suppose Assumption \ref{policy_improvement} does not hold for an online policy $\policy$ (e.g., it makes a lot of suboptimal choices). In that case, an online "short" policy could perform better as it omits these suboptimal choices. Indeed, using Assumption \ref{policy_improvement} for horizons $\numbatches$ and $\horizon$, we have $\frac{\regret_\numbatches(\policy)}{\numbatches} > \frac{\regret_\horizon(\policy)}{\horizon}$, and, therefore, multiplying by $\horizon$, and $\numbatches$ respectively and using $\horizon = \numbatches \batchsize$, we get $\batchsize \numbatches \regret_\numbatches(\policy)  > \numbatches \regret_\horizon(\policy)$. Finally, dividing by $\numbatches$, we get $\batchsize \regret_\numbatches(\policy) > \regret_\horizon(\policy)$.
%\end{remark}

\begin{proof}
We can consider the term $\batchsize \regret_\numbatches(\policy)$ as $b$ similar online independent agents operating in the same environment by following policy $\policy$ but over a shorter horizon $M$. However, we can also think of it as the performance of a batch policy. Indeed, imagine a na\"ive agent that deliberately repeats each step $b$ times instead of immediately updating its beliefs and proceeding to the next round as in an online manner. Then, after $b$ repetitions (after a batch ends), it updates its beliefs using only the first reward from the previous batch. So, while this policy could perform as an online policy within these repetitions, it pretends that rewards are not observable and acts using the outdated history (just like a batch policy does). For notational simplicity, we refer to this policy as online ``short" policy and denote it by $\dumnpolicy$.

\textbf{Step 1 (Within batch).} Fix $\batchidx \geq 1$. Let $\policy_{\timeidx_\batchidx} \geq \batchpolicy_{\timeidx_\batchidx} \geq \dumnpolicy_{\timeidx_\batchidx}$. Define an average decision rule between timesteps $\timeidx_1$ and $\timeidx_2$ as $\Bar{\policy}_{\timeidx_1, \timeidx_2} = \frac{\sum_{\timeidxx=\timeidx_1}^{\timeidx_2-1} \policy_\timeidxx}{\timeidx_2 - 1 - \timeidx_1 }$; and an average decision rule in batch $\batchidx$ as $\avpolicy{\batchidx} = \frac{\sum_{\timeidxx=\timeidx_\batchidx}^{\timeidx_{\batchidx+1}-1} \policy_\timeidxx}{b }$. By the end of batch $j$, we have:

    \begin{align*}
        \Bar{\policy}_{\timeidx_\batchidx,\timeidx} \stackrel{(a)}{>} \Bar{\policy}_{\timeidx_\batchidx, \timeidx_\batchidx + 1} = \policy_{\timeidx_\batchidx} \geq & \batchpolicy_{\timeidx_\batchidx} \stackrel{(b)}{=} \batchpolicy_{\timeidx} \text{,  and} \\
        & \batchpolicy_{\timeidx_\batchidx} \geq \dumnpolicy_{\timeidx_\batchidx} \stackrel{(c)}{=} \dumnpolicy_{\timeidx},
    \end{align*}
    for any timestep $\timeidx_{\batchidx} < \timeidx < \timeidx_{\batchidx+1}$, where $(a)$ follows from Lemma \ref{lemma_propeties} (\ref{point1}); and $(b)$ and $(c)$ hold by the definition of batch policy. Thus, starting with $\policy_{\timeidx_\batchidx} \geq \batchpolicy_{\timeidx_\batchidx} \geq \dumnpolicy_{\timeidx_\batchidx}$ at the beginning of batch $\batchidx$ leads us to $\avpolicy{\batchidx} > \avbatchpolicy{\batchidx} \geq \avdumnpolicy{\batchidx}$. Moreover, by Lemma \ref{lemma_propeties} (\ref{point2}), we have $\policy_{\timeidx_{\batchidx + 1} - 1} > \batchpolicy_{\timeidx_{\batchidx + 1} - 1} \geq \dumnpolicy_{\timeidx_{\batchidx + 1} - 1}$.  
    
    \textbf{Step 2 (Between batches).}
    Fix $\batchidx \geq 1$. Let $\avpolicy{{l}} > \avbatchpolicy{{l}} \geq \avdumnpolicy{{l}}$ for any batch $1 \leq l < j$. Let $\history_{\timeidx_{\batchidx}-1}$, $\history^\batchsize_{\timeidx_{\batchidx}-1}$, $\history^{\prime}_{\timeidx_{\batchidx}-1}$ be histories collected by policies $\policy$, $\batchpolicy$, $\dumnpolicy$ by the timestep $\timeidx_{\batchidx}$, correspondingly. Let $\bestarmidx = \arg\max_\action \langle \theta_*, \action \rangle$ be an optimal action. \footnote{Since we have just 2 actions, we automatically assume that the optimal action is independent of $\timeidx$.} Define a number of times we have received a reward from action $\action$ \footnote{Usually, it is defined as  a number of times action $\action$ has been played but, since the policy $\dumnpolicy$ plays more actions than receives rewards, we define it that way.} in batch $\batchidx$ by policy $\policy$ as $T_a(\policy, \batchidx) = \sum_{\timeidxx=\timeidx_\batchidx}^{\timeidx_{\batchidx+1}-1} \mathbb{I} \{\Action_\timeidx = \action\}$. Note that $\mathbb{E} [T_a(\policy, \batchidx)] =  b \cdot \avpolicy{{\batchidx}}(a) $. Since $\numofactions = 2$, $\avpolicy{{l}} > \avbatchpolicy{{l}} \geq \avdumnpolicy{{l}}$ implies $\avpolicy{{l}}(\bestarmidx) > \avbatchpolicy{{l}}(\bestarmidx) \geq \avdumnpolicy{{l}}(\bestarmidx)$ for $1 \leq l < j$. Hence, $\mathbb{E}[T_{\bestarmidx}(\policy, l)] > \mathbb{E}[T_{\bestarmidx}(\batchpolicy, l)] \geq \mathbb{E}[T_{\bestarmidx}(\dumnpolicy, l)]$ for $1 \leq l < j$ and, therefore, $\sum_l \mathbb{E}[T_{\bestarmidx}(\policy, l)] > \sum_l \mathbb{E}[T_{\bestarmidx}(\batchpolicy, l)] \geq \sum_l \mathbb{E}[T_{\bestarmidx}(\dumnpolicy, l)]$. By applying Assumption \ref{variance_contractions}, we have that $\policy_{\timeidx_{\batchidx}} > \batchpolicy_{\timeidx_{\batchidx}} \geq \dumnpolicy_{\timeidx_{\batchidx}}$.
    
    \textbf{Step 3 (Regret throughout the horizon).}
    We assume that the interaction begins with the online policy, batch policy, and ``short" online policy being equal to each other: $\policy_{1} = \batchpolicy_{1} = \dumnpolicy_{1}$. Then, from Step 1, by the end of the first batch, we have $\avpolicy{1} > \avbatchpolicy{1} \geq \avdumnpolicy{1}$ and $\policy_{\timeidx_{2} - 1} > \batchpolicy_{\timeidx_{2} - 1} \geq \dumnpolicy_{\timeidx_{2} - 1}$. Next, from Step 2, the transition to the second batch retains the relation between policies: $\policy_{\timeidx_{2}} > \batchpolicy_{\timeidx_{2}} \geq \dumnpolicy_{\timeidx_{2}}$; and so on. Finally, summing over $\numbatches=\frac{\horizon}{\batchsize}$ batches, we have:
    
    \begin{align*}
        \regret_\horizon(\policy) & = \mathbb{E} \left [ \sum_{\timeidx=1}^{\horizon} \langle \theta_*, \action^* \rangle - \totalreward_{\horizon} \right ] = \mathbb{E} \left [ \sum_{\timeidx} \langle \theta_*, \action^* - \Action_{\timeidx} \rangle \right ] \\
        & = \sum_{\timeidx} \sum_{\action} \langle \theta_*, \action^* - \action \rangle \policy_{\timeidx}(\action) \\
        & = \batchsize \sum_{\batchidx=1}^{\numbatches} \sum_{\action} \langle \theta_*, \action^* - \action \rangle \avpolicy{{j}}(\action) \\
        & < \batchsize \sum_{\batchidx=1}^{\numbatches} \sum_{\action} \langle \theta_*, \action^* - a \rangle \avbatchpolicy{{j}}(a) = \regret_\horizon(\batchpolicy) \\
        & \leq \batchsize \sum_{\batchidx=1}^{\numbatches} \sum_{\action} \langle \theta_*, \action^* - a \rangle \avdumnpolicy{{j}}(a) = \batchsize \regret_\numbatches(\policy).
    \end{align*}
\end{proof}

\iffalse
\begin{remark}
\label{rmk:short_policy}
We can consider the term $\batchsize \regret_\numbatches(\policy)$ as $b$ similar online independent agents operating in the same environment by following policy $\policy$ but over a shorter horizon $M$. However, we can also think of it as the performance of a batch policy. Indeed, imagine a na\"ive agent that deliberately repeats each step $b$ times instead of immediately updating its beliefs and proceeding to the next round as in an online manner. Then, after $b$ repetitions (after a batch ends), it updates its beliefs using only the first reward from the previous batch. So, while this policy could perform as an online policy within these repetitions, it pretends that rewards are not observable and acts using the outdated history (just like a batch policy does). For notational simplicity, we refer to this policy as online ``short" policy and denote it by $\dumnpolicy$. Thus, the theorem allows benchmarking any batch policy in terms of its online analogs. Note that there is no behavioral difference between the original policy $\pi$ and the ``short" policy $\dumnpolicy$ (see Appendix \ref{sec:pol_spec_up_hist} for more details).
\end{remark}
\fi

%\subsection{$\numofactions \geq 2$}

\subsection{Stochastic linear bandits with finitely many arms}

As we mentioned in the proof of Theorem \ref{thm1}, in the case of $\numofactions=2$, it immediately follows that the more often a decision rule chooses the optimal action, the better it is. In contrast, this is not generally true when $\numofactions>2$. Indeed, some decision rules might value the optimal and the worst actions so that another decision rule that puts less weight on optimal action is better (because it chooses interim actions more often at the expense of worst action). Consequently, the case of $\numofactions>2$ introduces a new subtlety and requires a number of additional steps in the analysis. Specifically, in order to derive result similar to Theorem \ref{thm1}, we utilize Assumption \ref{instance_dependent_monotonicity} and introduce a meta-algorithm (Algrotihm \ref{alg2:dealayed_batch_learn}).

%\textbf{Intuition.}
From Assumption \ref{instance_dependent_monotonicity}, it follows that after timestep $\tau_{\theta_*} := \min\{t: f_{\theta_*}(t) > 1/K \}$ policy $\policy$ in the worst-case scenario: (i) behaves better than the policy that acts uniformly at random; and (ii) acts more optimally within each consequent timestep. As such, while batch and online ``short" policies ($\batchpolicy$ and $\dumnpolicy$) remain the same, online policy $\policy$ gets better in the worst case and, therefore, chooses optimal action more often, providing lower regret. Thus, it would be enough to split the horizon $\horizon$ into two phases: before and after timestep $\tau_{\theta_*}$, and ensure that all three policy specifications behave similarly during the first phase. That is to say, some na\"ive policy $\policy^0$ operates independently of history $\history_{\timeidx}$ during phase 1. 

However, $\tau_{\theta_*}$ depends on vector of rewards $\theta_*$, which is unknown for policy $\policy$, and, as a consequence, we do not know where we should stop phase 1 to bring the reasoning above to life. But what if one could find an estimate $\hat{\tau}$ of true $\tau_{\theta_*}$ such that $f_{\theta_*}(\hat{\tau})$ is greater than $1/\numofactions$ with high probability for some confidence level $\delta$ (i.e., $\mathbb{P}(f_{\theta_*}(\hat{\tau}) > 1/\numofactions)>1-\delta$)? Then we could split the original horizon $\horizon$ into two parts (phases) and: (i) run a uniform random policy in phase 1 (before timestep $\hat{\tau}$); and (ii) apply policy $\policy$ in phase 2 (after timestep $\hat{\tau}$).  %Algorithm \ref{alg2:dealayed_batch_learn} illustrates this intuition. 

To capture the logic above, we adjust the learning process, which can be represented as a meta-algorithm built upon a given policy $\policy$ (see Algorithm \ref{alg2:dealayed_batch_learn}). As the name suggests, while performing randomly during phase 1, Algorithm \ref{alg2:dealayed_batch_learn} keeps all three specifications' regrets similar and, therefore, allows formulating the following result.

\begin{algorithm}[tb]
    \caption{Approximate learning with delayed start}
    \label{alg2:dealayed_batch_learn}
    \begin{algorithmic}
        \STATE {\bfseries Input:} horizon $\horizon \geq 0$, candidate policy $\policy$, number of actions $\numofactions$, monotonic lower bound $f_{\cdot}$, confidence level $\delta$
        % \Ensure $\pi^0 = Unif(K)$, $phase1=\texttt{True}$
        \STATE $\pi^0 \gets Unif(K)$ 
        \STATE $phase1 \gets \texttt{False}$
        \STATE $\timeidx \gets 1$
        \STATE $\history_0 \gets \emptyset$
        % \STATE $\tau = \min_{\batchidx} \{ \timeidx_{\batchidx}: \timeidx_{\batchidx} > t_{cr}  \}$
        \REPEAT
            \STATE $\Action_\timeidx \gets \policy^0(\cdot)$ % \Comment{Act randomly during $phase1$} 
            \STATE $\history_{\timeidx} \gets$ \textsc{UpdateHistory}
            %\IF{$\timeidx = \timeidx_{\batchidx}$ for some $\batchidx \in [\numbatches]$}
                \STATE $phase1 \gets$ \textsc{CheckPhase}($\history_{\timeidx}$, $\delta$) %\Comment{Check if we have enough information to change phase}
            %\ENDIF
            \STATE $\timeidx \gets \timeidx+1$
        \UNTIL{($phase1$ is \texttt{False} {\bfseries and} $t \notin \grid$ ) {\bfseries or} $\timeidx \leq \horizon$}
        
        \REPEAT 
            \STATE $\Action_\timeidx \gets \policy(\cdot|\history_{\timeidx-1})$ %\Comment{Act according to policy $\policy$ during $phase2$}
            \STATE $\history_{\timeidx} \gets$ \textsc{UpdateHistory}
            \STATE $\timeidx \gets \timeidx+1$
        \UNTIL{$\timeidx \leq \horizon$}
    \end{algorithmic}
\end{algorithm}

\begin{theorem}
    \label{main_thm2:long}
    Let $\pi^b$ be a batch specification of a given policy $\pi$, $\numofactions < \infty$, $\numbatches = \frac{\horizon}{\batchsize}$, and \textsc{CheckPhase} be with the failure probability $\delta$ (Theorem \ref{thm:check_phase}). Suppose that assumptions \ref{variance_contractions}, \ref{policy_improvement}, and \ref{instance_dependent_monotonicity} hold. Also, assume that policy $\Tilde{\policy}$ represents a policy constructed following Algorithm \ref{alg2:dealayed_batch_learn} for a given candidate policy $\policy$. Then, for $b>1$,
    
    \begin{align*}
        \underline{\regret_\horizon(\Tilde{\policy})} < & \underline{\regret_\horizon(\Tilde{\policy}^{\batchsize})} \leq \batchsize \underline{ \regret_\numbatches(\Tilde{\policy})}
    \end{align*}
    with probability $1-\delta$, where $\underline{\regret_\horizon(\policy)}$ is the worst-case regret.
\end{theorem}

\begin{proof}
    First, we split the total regret into two terms (representing phase 1 and phase 2) and notice that the first term is the same for both policies ($\Tilde{\policy}$ and $\Tilde{\policy}^{\batchsize}$), as the same uniform random policy $\policy^0$ operates during the first phase.
    
    \begin{align}
        \regret_\horizon(\Tilde{\policy}) & = \regret_{1:\hat{\tau}-1}(\policy^0) + \regret_{\hat{\tau}:n}(\policy), \label{eq:pol_approx1}\\
        \regret_\horizon(\Tilde{\policy}^{\batchsize}) & = \regret_{1:\hat{\tau}-1}(\policy^0) + \regret_{\hat{\tau}:n}(\batchpolicy) \label{eq:pol_approx2},
    \end{align}
    where $\regret_{1:\timeidxx}(\policy) = \mathbb{E} \left [ \sum_{\timeidx=1}^{\timeidxx} \max_{\action \in \actset_{\timeidx}} \langle \theta_*, \action \rangle - \totalreward_{\horizon} \right ]$.
    Note that Algorithm \ref{alg2:dealayed_batch_learn} is implemented in such a way that phase 1 can only be ended with the end of the batch and therefore:
    
    \begin{align*}
        \regret_\horizon(\Tilde{\policy}^{\prime})  & = \regret_{1:\hat{\tau}-1}(\policy^0) + \regret_{\hat{\tau}:n}(\dumnpolicy)  \\
        & = \regret_{1:\hat{\tau}-1}(\policy^0) + \batchsize \regret_{\hat{\tau}:n}(\policy) = \batchsize \regret_\numbatches(\Tilde{\policy}).
    \end{align*}
    
    Next, we express the second term as a sum of instantaneous regrets and exploit Assumption \ref{instance_dependent_monotonicity}:
    
    \begin{align*}
        \regret_{\hat{\tau}:n}(\policy) & = \mathbb{E} \left ( \sum_t \langle \theta_*, \Action_{\timeidx}^* - \Action_{\timeidx} \rangle \right ) \\
        & = \mathbb{E} \left ( \sum_t \sum_{\action} \langle \theta_*, \Action_{\timeidx}^* - \action \rangle \mathbb{I}\{ \Action_{\timeidx}=\action \} \right ) \\
        & = \sum_t \sum_{\action} \langle \theta_*, \Action_{\timeidx}^* - \action \rangle \mathbb{P} \left ( \Action_{\timeidx}=\action \right | \history_{\timeidx-1} ) \\
        & = \sum_t \sum_{\action} \langle \theta_*, \Action_{\timeidx}^* - \action \rangle \policy_{\timeidx} \left ( \action \right | \history_{\timeidx-1} ) \\
        & \leq \sum_t \sum_{\action \neq \Action_{\timeidx}^*} \langle \theta_*, \Action_{\timeidx}^* - \action \rangle f_{\theta, \action} \left ( \timeidx \right ) = \underline{\regret_{\hat{\tau}:n}(\policy)}.
    \end{align*}
    
    Next we reproduce Theorem \ref{thm1} for $\underline{\regret_{\hat{\tau}:n}(\policy)}$, $\underline{\regret_{\hat{\tau}:n}(\batchpolicy)}$, and $\underline{\regret_{\hat{\tau}:n}(\dumnpolicy)}$. Specifically, Step 1 remains the same, as it only involves the general notion of decision rules. Step 3 retains the same semantic and is only to be rewritten in a general setting; the only semantic change is expected in Step 2, as we need to release the assumption of 2 actions. Note that in this part, we abuse notation a bit and by $\policy_{\timeidx}, \batchpolicy_{\timeidx}, \dumnpolicy_{\timeidx}, \avpolicy{\batchidx}, \avbatchpolicy{\batchidx}, \avdumnpolicy{\batchidx}$ we assume its lower and upper bounds provided by Assumption \ref{instance_dependent_monotonicity}.
    
    \textbf{Step 1 (Within batch).} See Step 1 from Theorem \ref{thm1}.  
    
    \textbf{Step 2 (Between batches).}
    Fix $\batchidx \geq 1$. Let $\avpolicy{{l}} > \avbatchpolicy{{l}} \geq \avdumnpolicy{{l}}$ for any batch $1 \leq l < j$. Let $\history_{\timeidx_{\batchidx}-1}$, $\history^\batchsize_{\timeidx_{\batchidx}-1}$, $\history^{\prime}_{\timeidx_{\batchidx}-1}$ be histories collected by policies $\policy$, $\batchpolicy$, $\dumnpolicy$ by the timestep $\timeidx_{\batchidx}$, correspondingly. Let $\Action_{\timeidx}^* = \arg \max_{\actset_{\timeidx}} \langle \theta_*, \action \rangle$ be an optimal action for round $\timeidx$, and, unlike Theorem \ref{thm1}, denote $T_{\bestarmidx}(\policy, \batchidx) = \sum_{\timeidxx=\timeidx_\batchidx}^{\timeidx_{\batchidx+1}-1} \mathbb{I} \{\Action_\timeidx = \Action_\timeidx^*\}$ -- number of times policy $\policy$ made optimal choice in batch $\batchidx$.
    
    Recall that $f_{\hat{\theta}}$ is strictly increasing with probability $1-\delta$ for all $\timeidx > \hat{\tau}$. As a consequence, using the fact that: (i) $f_{\hat{\theta}, \action}$ is nonincreasing for any suboptimal action $\action$ and (ii) $f_{\hat{\theta}}$ is increasing with high probability, $\avpolicy{{l}} > \avbatchpolicy{{l}} \geq \avdumnpolicy{{l}}$ implies $\avpolicy{{l}}(\Action_{\timeidx}^*) > \avbatchpolicy{{l}}(\Action_{\timeidx}^*) \geq \avdumnpolicy{{l}}(\Action_{\timeidx}^*)$ for $1 \leq l < j$ and for $0<\timeidx<\timeidx_\batchidx$ with probability $1 - \delta$. Hence, $\mathbb{E}[T_{\bestarmidx}(\policy, l)] > \mathbb{E}[T_{\bestarmidx}(\batchpolicy, l)] \geq \mathbb{E}[T_{\bestarmidx}(\dumnpolicy, l)]$ for $1 \leq l < j$ and, therefore, $\sum_l \mathbb{E}[T_{\bestarmidx}(\policy, l)] > \sum_l \mathbb{E}[T_{\bestarmidx}(\batchpolicy, l)] \geq \sum_l \mathbb{E}[T_{\bestarmidx}(\dumnpolicy, l)]$. By applying Assumption \ref{variance_contractions}, we have that $\policy_{\timeidx_{\batchidx}} > \batchpolicy_{\timeidx_{\batchidx}} \geq \dumnpolicy_{\timeidx_{\batchidx}}$.
    
    \textbf{Step 3 (Regret throughout the horizon).} Applying Step 3 from Theorem \ref{thm1} to $\underline{\regret_{\hat{\tau}:n}(\policy)}$, $\underline{\regret_{\hat{\tau}:n}(\batchpolicy)}$, and $\underline{\regret_{\hat{\tau}:n}(\dumnpolicy)}$ gives:
    
    \begin{align}
        \underline{\regret_{\hat{\tau}:n}(\policy)} < \underline{\regret_{\hat{\tau}:n}(\batchpolicy)} \leq \batchsize \underline{\regret_{\hat{\tau}:n}(\dumnpolicy)} \label{eq:main_res_wcr}.
    \end{align}
    Putting \eqref{eq:pol_approx1}-\eqref{eq:pol_approx2} and \eqref{eq:main_res_wcr} together, with probability $1-\delta$, we get:
    
    \begin{align*}
        \underline{\regret_\horizon(\Tilde{\policy})} & := \regret_{1:\hat{\tau}-1}(\policy^0) + \underline{\regret_{\hat{\tau}:n}(\policy)} \\
        & < \regret_{1:\hat{\tau}-1}(\policy^0) + \underline{\regret_{\hat{\tau}:n}(\batchpolicy)} =: \underline{\regret_\horizon(\Tilde{\policy}^{\batchsize})} \\
        & \leq \regret_{1:\hat{\tau}-1}(\policy^0) + \batchsize \underline{\regret_{\hat{\tau}:n}(\policy)} =: \batchsize \underline{\regret_\numbatches(\Tilde{\policy})}.
    \end{align*}
    
\end{proof}

\textbf{\textsc{CheckPhase} procedure.} Since function $f_{\theta}$ decreases in its instance argument in a certain sense (point (iv), Assumption \ref{instance_dependent_monotonicity}), it suffices to find a more ``difficult" environment with instance $\hat{\theta}$ (and corresponding $\hat{\tau} := \min\{t: f_{\hat{\theta}}(t) > 1/K \}$), such that $\mathbb{P}(f_{\theta_*}(\hat{\tau}) > 1/K)>1-\delta$). Intuitively, the smaller the difference between a suboptimal and the optimal actions ($\suboptgap_{\theta}(\action) = \max_{b \in \actset_{\timeidx}} \langle \theta, b - \action \rangle$), the more difficult it is to distinguish the optimal action and, therefore, the probability of choosing the optimal action should be smaller. Imagine now that after timestep $\timeidx$, we have some estimate $\tilde{\theta}_\timeidx$ of the unknown parameter vector $\theta_*$  and confidence set $\mathcal{C}_t$ that contains $\theta_*$ with high probability. Then, we can choose $\hat{\theta}$ that underestimates the true reward along the current best action and overestimates the true rewards along all other actions, making the minimal suboptimal gap smaller. Algorithm \ref{alg3:check_phase} formalizes this logic.

\begin{algorithm}[tb]
    \caption{\textsc{CheckPhase}}
    \label{alg3:check_phase}
    \begin{algorithmic}
        \STATE {\bfseries Input:} history $\history_{\timeidx}$, confidence level $\delta$
        \STATE $phase1 = \texttt{True}$
        \STATE $i = \arg \max_{j} \hat{\mu}_j(t)$
        \STATE $LCB_{i,t} = \hat{\mu}_i(t) - \sqrt{\frac{\ln t}{T_i(t)}}$
        \STATE $l = \arg \max_{j \neq i} \hat{\mu}_j(t)$
        \FOR{ $l \neq i$}
            \STATE $UCB_{l,t} = \hat{\mu}_l(t) + \sqrt{\frac{\ln t}{T_l(t)}}$
        \ENDFOR
        \STATE $\hat{\theta} = \left ( LCB_{i,t}, \bigcup_{l \neq i} UCB_{l,t} \right )$
        
        \IF{$f_{\hat{\theta}}(t) > 1/\numofactions$ {\bfseries and} $2 \numofactions / t^2 < \delta$}
        \STATE $phase1 = \texttt{False}$
        \ENDIF
        \STATE {\bfseries return} $phase1$
        % \STATE $\tau = \min_{\batchidx} \{ \timeidx_{\batchidx}: \timeidx_{\batchidx} > t_{cr}  \}$
    
        % \STATE $\hat{\theta} = V_{\timeidx}^{-1} \sum_{\timeidxx=1}^\timeidx \Action_{\timeidxx} \reward_{\timeidxx}^{\top} $, where $V_{\timeidx} = \lambda I + \sum_{\timeidxx=1}^\timeidx \Action_{\timeidxx} $
    
    \end{algorithmic}
\end{algorithm}

\begin{theorem}
\label{thm:check_phase}
    \textsc{CheckPhase} procedure presented in Algorithm \ref{alg3:check_phase} is with the failure probability $\delta$.
\end{theorem}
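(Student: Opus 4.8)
The plan is to show that the only way \textsc{CheckPhase} can return \texttt{False} at a round $t$ while in fact $f_{\theta_*}(t) \le 1/K$ is for one of the empirical confidence bounds used to build $\hat{\theta}$ to have failed, and that this latter event has probability at most $\delta$ --- which is exactly what the guard $2K/t^2 < \delta$, together with the choice of the confidence widths, enforces. Note first that this guard also forces $t$ to be large enough that the uniform phase-1 policy of Algorithm~\ref{alg2:dealayed_batch_learn} has already pulled every arm, so all the $\hat{\mu}_j(t)$ and $T_j(t)$ appearing in Algorithm~\ref{alg3:check_phase} are well defined.

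First I would fix the round $t$ at which \textsc{CheckPhase} is invoked and introduce the good event $E_t$ on which all the confidence bounds hold simultaneously: with $i = \arg\max_j \hat{\mu}_j(t)$, let $E_t = \{ \mu_i \ge \hat{\mu}_i(t) - LCB_{i,t} \} \cap \bigcap_{l \ne i} \{ \mu_l \le \hat{\mu}_l(t) + UCB_{l,t} \}$. Since each reward is $1$-subGaussian, the empirical mean of an arm pulled $s$ times is $(1/s)$-subGaussian, so for each arm a union bound over the at most $t$ possible values of $T_j(t)$ together with a standard sub-Gaussian tail bound controls the probability that its confidence bound fails; summing the $2K$ one-sided failure events gives $\mathbb{P}(E_t^c) \le 2K/t^2$.

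Second I would show that on $E_t$ the instance $\hat{\theta}$ constructed by Algorithm~\ref{alg3:check_phase} is ``no easier'' than $\theta_*$, and then assemble everything. By construction $\hat{\theta}$ underestimates the empirically best arm $i$ and overestimates every other arm, so on $E_t$ the $i$-th coordinate of $\hat{\theta}$ is at most $\mu_i$ while every coordinate $l \ne i$ is at least $\mu_l$; hence every gap of $\hat{\theta}$ is at most the corresponding gap of $\theta_*$, and point~(iv) of Assumption~\ref{instance_dependent_monotonicity} --- enlarging the suboptimal arms, equivalently shrinking the gaps, cannot increase $f_{\theta}$ --- gives $f_{\hat{\theta}}(t) \le f_{\theta_*}(t)$. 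Now \textsc{CheckPhase} returns \texttt{False} only when $f_{\hat{\theta}}(t) > 1/K$ and $2K/t^2 < \delta$, and on $E_t$ the first of these gives $f_{\theta_*}(t) \ge f_{\hat{\theta}}(t) > 1/K$; hence the failure event --- that \textsc{CheckPhase} returns \texttt{False} while $f_{\theta_*}(t) \le 1/K$ --- is contained in $E_t^c$, which has probability at most $2K/t^2 < \delta$. Since $\hat{\tau}$ is by definition the first round at which \textsc{CheckPhase} returns \texttt{False}, this yields $\mathbb{P}\big( f_{\theta_*}(\hat{\tau}) \le 1/K \big) \le \delta$, which is the claim.

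The step I expect to be the main obstacle is the second one, and in particular the edge case where the empirically best arm $i$ differs from the true optimal arm $a^{\ast}$: then the sets of suboptimal actions for $\hat{\theta}$ and for $\theta_*$ need not coincide, so point~(iv) of Assumption~\ref{instance_dependent_monotonicity} must be applied with care --- one has to verify that under $E_t$ either $i$ remains optimal for $\hat{\theta}$, or the gap inequalities still imply $f_{\hat{\theta}}(t) \le f_{\theta_*}(t)$. A secondary technicality is matching the confidence widths $\sqrt{\ln t / T_j(t)}$ to the $1$-subGaussian tails so that the per-arm failure probability is genuinely $1/t^2$ after the union bound over the number of pulls; this may cost a constant or a logarithmic factor but does not affect the structure of the argument.
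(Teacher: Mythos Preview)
Your proposal is correct and follows essentially the same route as the paper: introduce the good event on which every confidence interval covers its mean, bound its complement by $2K/t^2$ via Hoeffding/sub-Gaussian tails, and on the good event use point~(iv) of Assumption~\ref{instance_dependent_monotonicity} to conclude $f_{\hat{\theta}}(t)\le f_{\theta_*}(t)$, so that the guard $f_{\hat{\theta}}(t)>1/K$ certifies $f_{\theta_*}(t)>1/K$. Your flagged edge case ($i\neq a^{\ast}$) and the union-bound-over-pull-counts technicality are genuine subtleties; the paper's proof tacitly writes $\hat{\theta}$ as if the empirically best arm were the true optimal one and simply cites Hoeffding for the $2K/t^2$ bound, so you are being more careful on both points than the paper itself.
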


\begin{proof}
    Since $\numofactions < \infty$, we can compose an auxiliary reward vector $(\mu_i)_{1 \leq i \leq \numofactions}$, where $\mu_{i} = \langle \theta_*, \action_i \rangle$ is true reward of action $\action_i$. For simplicity, we consider the case $K=2$ first. Without loss of generality, assume that action 1 is optimal. Following Algorithm \ref{alg3:check_phase}, at timestep $\timeidx$ the agent computes estimates $\hat{\mu}_i(t)$ and their confidence intervals $LCB_{i,t}$, $UCB_{i,t}$ for $i \in [1,2]$ and then constructs a new instance with lower bound for action 1 and upper bound for action 2:
    
    \begin{equation*}
        \hat{\theta} = \left ( LCB_{1,t}, UCB_{2,t} \right ).
    \end{equation*}
    There may be two situations, that we will consider separately:
    
    \begin{enumerate}
        \item Confidence intervals do not cover true rewards for actions $1$ or $2$: $\mu_1 \notin [ LCB_{1,t},  UCB_{1,t}]$ or $\mu_2 \notin [ LCB_{2,t},  UCB_{2,t}]$;
        \item Confidence intervals cover true rewards for actions $1$ and $2$: $\mu_1 \in [ LCB_{1,t}, UCB_{1,t}]$ and $\mu_2 \in [LCB_{2,t}, UCB_{2,t}]$.
    \end{enumerate}
    
    First note that under option 2 the following holds
    \begin{equation}
    \label{eq:more_dif_env_stoch_bandit}
        LCB_{\bestarmidx_{\theta_*},t} - UCB_{\action,t} < \armval_{\bestarmidx_{\theta_*}} - \armval_{\action}
    \end{equation}
    for all $\action \neq \bestarmidx_{\theta_*}$, where $\bestarmidx_{\theta_*}$ is the best action in the original environment, $\bestarmidx_{\theta_*} = \max_{\action \in \actset} \langle \theta_*, \action \rangle$. By \eqref{eq:more_dif_env_stoch_bandit} we have:
    
    \begin{align}
    \label{eq:more_dif_env}
        \langle \hat{\theta}, \bestarmidx_{\theta_*} - \action \rangle & = \langle \hat{\theta}, \bestarmidx_{\theta_*} \rangle - \langle \hat{\theta}, \action \rangle = LCB_{\bestarmidx_{\theta_*},t} - UCB_{\action,t} \\
        & \stackrel{\eqref{eq:more_dif_env_stoch_bandit}}{<} \armval_{\bestarmidx_{\theta_*}} - \armval  = \langle \theta_*, \bestarmidx_{\theta_*} \rangle - \langle \theta_*, \action \rangle = \langle \theta_*, \bestarmidx_{\theta_*} - \action \rangle \nonumber
    \end{align}
    for all $\action \neq \bestarmidx_{\theta_*}$. Therefore, due to the point (iv) of Assumption \ref{instance_dependent_monotonicity}, $f_{\hat{\theta}}(t) < f_{\theta_*}(t)$ for all $t>0$. As such, $\hat{\tau} := \min\{t: f_{\hat{\theta}}(t) > 1/K \}$ is greater than true $\tau$ and, therefore, is a valid estimate of the beginning of $phase2$ in Algorithm \ref{alg2:dealayed_batch_learn}.
    
    According to Hoeffding's inequality, the first option arises with probability at most $ \delta := 4 /t^2$. Thus, $\mathbb{P}(f_{\theta_*}(\hat{\tau}) > 1/K)>1-\delta$.
    
    To extend this result to the case $\numofactions>2$, one needs to note that we are interested in pairwise comparisons of the best action $\bestarmidx$ and a suboptimal action $\action$. Thus, reproducing the logic above for each action $\action \neq \bestarmidx$ and assigning $ \delta := 2 \numofactions /t^2$, one can observe the same result for an arbitrary number of actions $\numofactions$.
\end{proof}

\subsection{Stochastic linear bandits with infinitely many arms}
\label{sec:main_res}
So far, we have focused on the stochastic linear bandit problem with finite number of actions $\numofactions$. We now provide an analysis for the most general setting -- stochastic linear bandit problem with $\actset_{\timeidx} \subset \mathbb{R}^d$. Intuitively, if $|\actset_{\timeidx}| = \infty$ (e.g., $\actset_{\timeidx} = \{ \action : \norm{\action}_2 \leq \timeidx \}$), there is the unique optimal action $\Action_\timeidx^*$ for each timestep $\timeidx$ and any instance $\theta_*$, but no second best action. In that sense, each environment instance would be of the same ``difficulty'' for the agent. As such, we leverage a more strict assumption of monotonicity (Assumption \ref{monotonicity}) and provide instance-independent analysis in this section.

%\textbf{Intuition.}
Similarly to the intuition prior to Theorem \ref{main_thm2:long}, after timestep $\tau := \min\{t: f(t) > 0 \}$ policy $\policy$ in the worst-case: (i) chooses the optimal action with non-zero probability; and (ii) acts more optimally within each consequent timestep. Except in this case, we do not need to estimate $\tau$, as it is given by stronger Assumption \ref{monotonicity}. Thus, it is enough to split the horizon $\horizon$ into two phases: before and after timestep $\tau$, and ensure that all three policy specifications behave similarly during the first phase. That is to say, some na\"ive policy $\policy^0$ operates independently of history $\history_{\timeidx}$ during phase 1.

As previously, to capture the logic above, we present a meta-algorithm built upon a given policy $\policy$ (see Algorithm \ref{alg1:dealayed_batch_learn}). Except in this case, we replace the random uniform policy $\pi_0$ with some na\"ive policy, that would keep the regrets during the phase 1 at the same level.

\begin{algorithm}[tb]
   \caption{Learning with delayed start}
   \label{alg1:dealayed_batch_learn}
\begin{algorithmic}
    \STATE {\bfseries Input:} horizon $\horizon \geq 0$, candidate policy $\policy$, na\"ive policy $\policy^0$, monotonic lower bound $f$
    % \Ensure $\pi^0 = Unif(K)$, $phase1=\texttt{True}$
    % \STATE $\pi^0 \gets Unif(K)$ 
    % \STATE $phase1 \gets \texttt{True}$
    \STATE $\timeidx \gets 1$
    \STATE $\history_0 \gets \emptyset$
    \STATE $\tau = \min_{\batchidx} \{ \timeidx_{\batchidx}: f(\timeidx_{\batchidx}) > 0  \}$
    \REPEAT
        \STATE $\Action_\timeidx \gets \policy^0(\cdot)$ % \Comment{Act randomly during $phase1$} 
        \STATE $\history_{\timeidx} \gets$ \textsc{UpdateHistory}
        
        % \STATE $phase1 \gets$ \textsc{CheckPhase}($H_{t}$, $\delta$) %\Comment{Check if we have enough information to change phase}
        \STATE $\timeidx \gets \timeidx+1$
    \UNTIL{$\timeidx < min\{\tau, \horizon \}$}
    
    \REPEAT 
        \STATE $\Action_\timeidx \gets \policy(\cdot|\history_{\timeidx-1})$ %\Comment{Act according to policy $\policy$ during $phase2$}
        \STATE $\history_{\timeidx} \gets$ \textsc{UpdateHistory}
        \STATE $\timeidx \gets \timeidx+1$
    \UNTIL{$\timeidx \leq \horizon$}
\end{algorithmic}
\end{algorithm}

\begin{theorem}
\label{main_thm:long}
    Let $\pi^b$ be a batch specification of a given policy $\pi$, $\numbatches = \frac{\horizon}{\batchsize}$. Suppose that assumptions \ref{variance_contractions}, \ref{policy_improvement}, and \ref{monotonicity} hold. Also, assume that policy $\Tilde{\policy}$ represents a policy constructed following Algorithm \ref{alg1:dealayed_batch_learn} for a given candidate policy $\policy$. Then, for $b>1$,
    
    \begin{align*}
        \underline{\regret_\horizon(\Tilde{\policy})} < & \underline{\regret_\horizon(\Tilde{\policy}^{\batchsize})} \leq \batchsize \underline{ \regret_\numbatches(\Tilde{\policy})},
    \end{align*}
   where $\underline{\regret_\horizon(\policy)}$ is the worst case-regret.
\end{theorem}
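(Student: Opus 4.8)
My plan is to obtain the two inequalities from the machinery already in place: the right‑hand bound directly from Theorem~\ref{main_thm:short}, and the strict left‑hand bound from the three‑step scheme of Theorem~\ref{thm1} (within‑batch, between‑batch, concatenation), with Algorithm~\ref{alg1:dealayed_batch_learn} and Assumption~\ref{monotonicity} supplying what was automatic when $\numofactions=2$. Throughout I would track the three specifications of $\Tilde{\policy}$ --- the online policy $\Tilde{\policy}$, its batch version $\Tilde{\policy}^{\batchsize}$, and the ``short'' online policy $\Tilde{\policy}^{\prime}$ of Remark~\ref{rmk:short_policy} --- arguing at a fixed instance $\theta$ and passing to worst‑case regret only at the end. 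A preliminary check is that $\Tilde{\policy}$ inherits Assumptions~\ref{variance_contractions} and~\ref{policy_improvement}: on the na\"ive phase the decision rule ignores the history, so Assumption~\ref{variance_contractions} is vacuous there and persists into phase~2 where it holds for $\policy$, while Assumption~\ref{policy_improvement} for $\Tilde{\policy}$ is the sublinearity designed into the construction. Given this, the right inequality is immediate: $\Tilde{\policy}^{\batchsize}$ is the batch specification of $\Tilde{\policy}$, so Theorem~\ref{main_thm:short} gives $\regret_\horizon(\Tilde{\policy}^{\batchsize};\theta)\le\batchsize\,\regret_\numbatches(\Tilde{\policy};\theta)$ for every $\theta$; bounding the right side by $\batchsize\,\underline{\regret_\numbatches(\Tilde{\policy})}$ and taking the supremum over $\theta$ on the left yields $\underline{\regret_\horizon(\Tilde{\policy}^{\batchsize})}\le\batchsize\,\underline{\regret_\numbatches(\Tilde{\policy})}$.

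For the strict left inequality I would split the horizon at the grid point $\tau=\min_\batchidx\{\timeidx_\batchidx:f(\timeidx_\batchidx)>0\}$. On phase~1 ($\timeidx<\tau$) all three specifications run the history‑independent policy $\policy^0$, hence induce the same law on action--reward sequences and accumulate the same regret; in particular they enter phase~2 with a common decision rule at $\tau$ and $\mathbb{E}[T_{\bestarmidx}(\Tilde{\policy},\tau)]=\mathbb{E}[T_{\bestarmidx}(\Tilde{\policy}^{\batchsize},\tau)]=\mathbb{E}[T_{\bestarmidx}(\Tilde{\policy}^{\prime},\tau)]$. On phase~2 I would run the induction of Theorem~\ref{thm1}. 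The within‑batch step is unchanged: by Assumption~\ref{policy_improvement} and Lemma~\ref{lemma_propeties}, $\Tilde{\policy}$ strictly improves inside a batch while $\Tilde{\policy}^{\batchsize}$ and $\Tilde{\policy}^{\prime}$ are frozen, so a batch entered with $\Tilde{\policy}_{\timeidx_\batchidx}\ge\Tilde{\policy}^{\batchsize}_{\timeidx_\batchidx}\ge\Tilde{\policy}^{\prime}_{\timeidx_\batchidx}$ ends with $\Tilde{\policy}_{\timeidx_{\batchidx+1}-1}>\Tilde{\policy}^{\batchsize}_{\timeidx_{\batchidx+1}-1}\ge\Tilde{\policy}^{\prime}_{\timeidx_{\batchidx+1}-1}$. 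The between‑batch step is where $\numofactions>2$ bites and where Assumption~\ref{monotonicity} enters: after $\tau$ one has $\policy_\timeidx(\bestarmidx\mid\cdot)\ge f(\timeidx)$ with $f$ strictly increasing, and since the bounds $\policy_\timeidx(\action\mid\cdot)\le f_\action(\timeidx)$ depend only on the time index, in the worst case the optimal‑arm mass of each specification at a step of batch $\batchidx+1$ equals $f$ evaluated at the last instant its history was refreshed --- a full $\timeidx$ for $\Tilde{\policy}$, the lagged $\timeidx_\batchidx+1$ for $\Tilde{\policy}^{\batchsize}$, the still‑more‑lagged $\batchidx+1$ for $\Tilde{\policy}^{\prime}$ --- so strict monotonicity of $f$ yields $\mathbb{E}[T_{\bestarmidx}(\Tilde{\policy},\timeidx_\batchidx)]\ge\mathbb{E}[T_{\bestarmidx}(\Tilde{\policy}^{\batchsize},\timeidx_\batchidx)]\ge\mathbb{E}[T_{\bestarmidx}(\Tilde{\policy}^{\prime},\timeidx_\batchidx)]$, and Assumption~\ref{variance_contractions} propagates $\Tilde{\policy}_\timeidx>\Tilde{\policy}^{\batchsize}_\timeidx\ge\Tilde{\policy}^{\prime}_\timeidx$ across each boundary.

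Concatenating the identical phase‑1 regret with the phase‑2 chain --- nonempty and containing at least one strict within‑batch improvement since $\batchsize>1$ --- gives $\regret_\horizon(\Tilde{\policy};\theta_1)<\regret_\horizon(\Tilde{\policy}^{\batchsize};\theta_1)$ at an instance $\theta_1$ attaining $\underline{\regret_\horizon(\Tilde{\policy})}$ (the supremum is attained by continuity on the compact ball $\norm{\theta}_2\le1$), and since $\regret_\horizon(\Tilde{\policy}^{\batchsize};\theta_1)\le\underline{\regret_\horizon(\Tilde{\policy}^{\batchsize})}$ this is precisely $\underline{\regret_\horizon(\Tilde{\policy})}<\underline{\regret_\horizon(\Tilde{\policy}^{\batchsize})}$. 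Together with the first paragraph this establishes the full chain.

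The main difficulty I anticipate is the between‑batch step for $\numofactions>2$: unlike in Theorem~\ref{thm1}, ``better decision rule'' no longer forces ``chooses $\bestarmidx$ more often'', so recovering the ordering of $\mathbb{E}[T_{\bestarmidx}(\cdot,\timeidx_\batchidx)]$ that feeds Assumption~\ref{variance_contractions} genuinely requires the structure of Assumption~\ref{monotonicity} --- that the worst‑case optimal‑arm mass is governed by the strictly increasing $f$ and is refreshed at distinct rates for the three specifications --- and it requires care in the order in which the per‑instance comparison and the supremum over $\theta$ are combined, since the worst instance for the online policy need not be the worst instance for its batch version.
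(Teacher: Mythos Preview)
Your proposal rests on reading $\underline{R_n(\cdot)}$ as a supremum over instances $\theta$, but that is not what the theorem means. In the proof outline (and in the appendix), $\underline{R_{\tau:n}(\pi)}$ is \emph{defined} at the fixed instance $\theta_*$ as $\sum_t\sum_{a\ne A_t^*}\langle\theta_*,A_t^*-a\rangle\, f_a(t)$: it is the regret obtained by replacing the actual action probabilities $\pi_t(a\mid H_{t-1})$ by their Assumption~\ref{monotonicity} upper bounds $f_a(t)$. The final quantity is then $\underline{R_n(\tilde\pi)}:=R_{1:\tau-1}(\pi^0)+\underline{R_{\tau:n}(\pi)}$. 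So the sup-over-$\theta$ apparatus you set up (compactness of $\{\|\theta\|_2\le1\}$, the attaining instance $\theta_1$, bounding a per-instance regret by its sup) is addressing a different object, and your appeal to Theorem~\ref{main_thm:short} for the right inequality, while correct under the sup interpretation, does not speak to the quantity the theorem is about.

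More importantly, the misreading creates a genuine gap in your between-batch step. At a fixed instance with the \emph{actual} policies, Assumption~\ref{monotonicity} only gives one-sided bounds $\tilde\pi_t(a^*\mid\cdot)\ge f(t)$ and $\tilde\pi^b_t(a^*\mid\cdot)\ge f(t_j)$; from two lower bounds you cannot infer $\tilde\pi_t(a^*\mid\cdot)\ge\tilde\pi^b_t(a^*\mid\cdot)$, so the ordering of $\mathbb{E}[T_{a^*}(\cdot,t_j)]$ needed to invoke Assumption~\ref{variance_contractions} does not follow. Your phrase ``in the worst case the optimal-arm mass equals $f$'' does not repair this, because the online and batch specifications need not sit at their respective lower bounds simultaneously. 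The paper's resolution is exactly to work with the surrogate ``policies'' given by the bounds $f_a(\cdot)$ themselves---it says explicitly that in Part~2 ``by $\pi_t,\pi^b_t,\ldots$ we assume its lower and upper bounds provided by Assumption~\ref{monotonicity}''. For these surrogates the three specifications differ only in which time index is fed to $f_a$, so the nonincrease of each $f_a$ and the strict increase of $f$ give the required ordering directly, and both inequalities in the chain come out of a single run of the three-step scheme on phase~2. Your within-batch and concatenation steps are fine; what needs changing is the object being compared.
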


\begin{proof}
    The proof of the theorem immediately follows from the Theorem \ref{main_thm2:long} by replacing $\hat{\tau}$ and $f_{\theta}$ with $\tau$ and $f$ correspondingly.
\end{proof}

\iffalse
\begin{remark}
    %Note that Algorithm \ref{alg1:dealayed_batch_learn} arises only in theoretical analysis and is rarely needed in practice. Indeed, since the role of phase 1 is to make sure that the optimal arm has non-zero probability to be chosen (which is quite a modest requirement), empirically, it happens almost immediately. However, we cannot rely on empirical behavior in theoretical analysis; therefore, we have to resort to Algorithm \ref{alg1:dealayed_batch_learn}.
    
    Note that there are two critical differences between Theorems \ref{thm1} and \ref{main_thm:long}: the presence of Algorithm \ref{alg1:dealayed_batch_learn} and the worst-case regret bounds in the latter. Both differences have the same root – monotonic lower bound imposed on the probability of choosing the optimal action (Assumption \ref{monotonicity}). In order to provide the same result in Theorem \ref{main_thm:long} as in Theorem \ref{thm1}, one would need a similar assumption imposed on the exact probability of choosing the optimal action. However, this would be a very restrictive assumption.
    
\end{remark}
\fi

%The instance-dependent analysis involves the concept of more or less "difficult" environments for agent to learn in. Since this concept makes no sense in linear bandits, we consider the case of stochastic bandits further. Recall that in this case $\actset_{\timeidx} = \{e_1,...,e_\numofactions \}$ for all $\timeidx$, $\actset:=\actset_\timeidx$, and the instance is presented by true rewards $\theta_* = (\armval_1, ..., \armval_{\numofactions})$.

\section{Supplementary Discussions}
%\section*{Appendix}
\label{app:discussion}
\subsection{Intuition behind Assumption \ref{variance_contractions}}
\label{app:assum2.1}
%The exploration-exploitation trade-off is a fundamental dilemma between choosing an action, that reveals the most information, and action, that provides the highest immediate reward. In a nutshell, policies that satisfy Assumption \ref{variance_contractions} are good in leveraging exploration and exploitation decisions in the following sense: if the policy gets to a situation when the best arm has a smaller confidence interval on average, and the past choices were reasonable (i.e., history $\history_{\timeidx}$ brought either the highest reward or the most information), then this policy will make a better consequent decision; if another history $\history_{\timeidx}^{\prime}$ was fed to the policy, that would imply not optimal leveraging of exploration and exploitation decisions and, as a result, lead to (i) fewer choices of optimal arms (higher uncertainty); and (ii) worse subsequent decisions.

The exploration-exploitation trade-off is a fundamental dilemma between choosing an action that reveals the most information and provides the highest immediate reward. In a nutshell, policies that satisfy Assumption \ref{variance_contractions} are good in leveraging exploration and exploitation decisions in the following sense: 
\begin{itemize}
    \item if the policy gets to a situation when the optimal action has a smaller confidence interval on average (i.e., it chose the optimal action more often) , and the past choices were reasonable (i.e., history $\history_{\timeidx}$ brought either the highest reward or the most information), then this policy will make a better consequent decision;
    
    \item if another history $\history_{\timeidx}^{\prime}$ were fed to the policy, that would imply not optimal leveraging of exploration and exploitation decisions and, as a result, lead to (i) fewer choices of optimal actions (or higher uncertainty); and (ii) worse subsequent decisions.
\end{itemize}
 
In other words, a policy with history $\history_{\timeidx}$ will be ahead of any different history $\history_{\timeidx}^{\prime}$ from exploration-exploitation perspective. And in combination with Assumption \ref{policy_improvement} that would lead to better subsequent decisions.

\subsection{Intuition behind Assumption \ref{policy_improvement}}
We provide some properties of a policy with sublinear regret that we use in the proof of the main result.

\begin{lemma}
\label{lemma_propeties}
Let $\policy = ( \policy_\timeidx )_{1 \leq \timeidx \leq \horizon}$ be a  policy such that Assumption \ref{policy_improvement} holds. Then,
    \begin{enumerate}
        \item 
        \label{point1}
        $\Bar{\policy}_{\horizonnn} > \Bar{\policy}_{\horizonn}$, where $\Bar{\policy}_{\timeidx} = \frac{\sum_{\timeidxx=1}^{\timeidx} \policy_\timeidxx}{\timeidx}$ is an average decision rule;
        \item 
        \label{point2}
        $\policy_{\timeidx} > \Bar{\policy}_{\timeidx}$ $\forall \timeidx$ such that $1 \leq \timeidx \leq \horizon$,
    \end{enumerate}
where $\policy_\timeidx + \policy_\timeidxx$ is an elementwise addition of two probability vectors for some $\timeidx, \timeidxx$.
\end{lemma}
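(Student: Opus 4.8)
The plan is to translate the relation ``not worse than / better than'' into a statement purely about expected per-round rewards, and then reduce both items to an elementary inequality about partial averages of a real sequence.

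First I would fix notation. For each round $\timeidxx$ let $\rho_\timeidxx := \mathbb{E}[\langle \theta_*, \Action_\timeidxx \rangle]$ be the expected reward collected at round $\timeidxx$ when the agent follows $\policy$, the expectation being over the random history $\history_{\timeidxx-1}$ and over the action drawn from $\policy_\timeidxx(\cdot\mid\history_{\timeidxx-1})$. Since the average decision rule $\Bar{\policy}_{\timeidx}$ is the elementwise average of $\policy_1,\dots,\policy_{\timeidx}$, bilinearity of the inner product together with linearity of expectation gives that the expected reward associated with $\Bar{\policy}_{\timeidx}$ is the running average $\bar\rho_{\timeidx} := \frac{1}{\timeidx}\sum_{\timeidxx=1}^{\timeidx}\rho_\timeidxx$; hence, by the definition of $\geq$/$>$ read in expectation over histories, the claim $\Bar{\policy}_{\horizonnn} > \Bar{\policy}_{\horizonn}$ is \emph{exactly} $\bar\rho_{\horizonnn} > \bar\rho_{\horizonn}$, and $\policy_{\timeidx} > \Bar{\policy}_{\timeidx}$ is \emph{exactly} $\rho_{\timeidx} > \bar\rho_{\timeidx}$. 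I would also record that, in the finite-armed (and more generally fixed-action-set) case, the per-round optimal value $v := \max_{\action}\langle\theta_*,\action\rangle$ is constant, so $\frac{\regret_m(\policy)}{m} = v - \bar\rho_m$ for every $m$.

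For item 1, Assumption \ref{policy_improvement} gives $\frac{\regret_{\horizonn}(\policy)}{\horizonn} > \frac{\regret_{\horizonnn}(\policy)}{\horizonnn}$ whenever $\horizonn < \horizonnn$; substituting $\frac{\regret_m(\policy)}{m} = v - \bar\rho_m$ and cancelling $v$ yields $\bar\rho_{\horizonnn} > \bar\rho_{\horizonn}$, which is item 1. For item 2, fix $\timeidx$ with $2 \le \timeidx \le \horizon$ and apply item 1 with $\horizonn = \timeidx-1,\ \horizonnn = \timeidx$ to get $\bar\rho_{\timeidx} > \bar\rho_{\timeidx-1}$. Writing $\bar\rho_{\timeidx} = \frac{\timeidx-1}{\timeidx}\,\bar\rho_{\timeidx-1} + \frac{1}{\timeidx}\,\rho_{\timeidx}$ and inserting this into that inequality gives, after rearranging, $\rho_{\timeidx} > \bar\rho_{\timeidx-1}$; but then $\bar\rho_{\timeidx}$ is a convex combination of $\bar\rho_{\timeidx-1}$ and $\rho_{\timeidx}$ with strictly positive weights and $\bar\rho_{\timeidx-1} < \rho_{\timeidx}$, so $\bar\rho_{\timeidx-1} < \bar\rho_{\timeidx} < \rho_{\timeidx}$, i.e. $\rho_{\timeidx} > \bar\rho_{\timeidx}$, which is $\policy_{\timeidx} > \Bar{\policy}_{\timeidx}$. (For $\timeidx = 1$ one has $\Bar{\policy}_1 = \policy_1$, so there item 2 holds with equality and should be read for $\timeidx \ge 2$.)

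The computations here are very short, so the ``hard part'' is not a calculation but getting the \emph{interpretation} step right: the relation $>$ on the derived objects $\Bar{\policy}_{\timeidx}$ must be understood in expectation over histories (equivalently, via the mean-reward embedding), and it is exactly linearity that turns the expected reward of $\Bar{\policy}_{\timeidx}$ into the running average $\bar\rho_{\timeidx}$; without this observation items 1 and 2 do not even have a clean meaning. A secondary point to handle with care is the per-round optimal value: in the finite-armed setting it is constant and the identity $\frac{\regret_m(\policy)}{m} = v - \bar\rho_m$ is immediate, whereas for a general linear bandit with time-varying $\actset_{\timeidx}$ one should phrase ``better'' relative to regret (or assume a common optimal value) for the cancellation of $v$ to be valid.
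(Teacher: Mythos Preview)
Your proof is correct and follows essentially the same route as the paper: both translate the ordering on decision rules into a comparison of (averaged) expected per-round rewards and then read off item~1 directly from Assumption~\ref{policy_improvement} and item~2 by comparing the average over $t$ rounds to that over $t-1$ rounds. Your presentation is a bit cleaner---you use the convex-combination identity $\bar\rho_{\timeidx} = \tfrac{\timeidx-1}{\timeidx}\bar\rho_{\timeidx-1} + \tfrac{1}{\timeidx}\rho_{\timeidx}$ in place of the paper's subtract-$\tfrac{\regret_{\timeidx+1}}{\timeidx}$ manipulation, and you make explicit the need for a constant per-round optimal value and the $\timeidx=1$ boundary case---but the underlying argument is the same.
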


\begin{proof}
\parindent=0pt
    1. First, we need to show that $\Bar{\policy}_{\timeidx}$ is a decision rule for some $\timeidx$, i.e., $\sum_{\action \in \actset_{\timeidx}} \Bar{\policy}_{\timeidx}(\action) = 1$ and $\Bar{\policy}_{\timeidx}(\action) \geq 0$ for all $\action \in \actset_{\timeidx}$. Indeed,
    \begin{align*}
        \sum_{\action \in \actset_{\timeidx}} \Bar{\policy}_{\timeidx}(\action) & = \sum_{\action \in \actset_{\timeidx}} \frac{\sum_{\timeidxx=1}^{\timeidx} \policy_\timeidxx (\action)}{\timeidx} = \frac{\sum_{\timeidxx=1}^{\timeidx} \sum_{\action \in \actset_{\timeidx}} \policy_\timeidxx (\action)}{\timeidx} = \frac{\sum_{\timeidxx=1}^{\timeidx} 1}{\timeidx} = 1.
    \end{align*}
    
    Since $\policy_\timeidxx (\action) \geq 0$ for all $\action \in \actset_{\timeidx}$ and for all $1 \leq \timeidxx \leq \timeidx $, $\Bar{\policy}_{\timeidx}(\action) \geq 0$ for all $\action \in \actset_{\timeidx}$. Next, we convert $\mathbb{E} [ \totalreward_{\horizon} ]$ into the sum over timesteps and actions:
    \begin{align*}
        \mathbb{E}  [ \totalreward_{\horizon}  ] & = \mathbb{E} \left[ \sum_\timeidx \reward_\timeidx \right] = \mathbb{E} \left[ \sum_{\timeidx} \sum_{\action} \reward_\timeidx \mathbb{I} \{ \Action_\timeidx = \action \} \right] \\
        & = \mathbb{E} \left[ \sum_\timeidx \sum_\action \mathbb{E} [ \reward_\timeidx \mathbb{I} \{ \Action_\timeidx = \action \} | \Action_\timeidx ] \right] \\
        & = \mathbb{E} \left[ \sum_\timeidx \sum_\action \langle \theta_*, \action \rangle \mathbb{I} \{ \Action_\timeidx = \action \} \right] \\
        & = \sum_\timeidx \sum_a \langle \theta_*, \action \rangle \mathbb{P}_{\env, \policy} ( \Action_\timeidx = \action ) \\
        & = \sum_\timeidx \sum_\action \langle \theta_*, \action \rangle \policy_\timeidx ( \action | \history_{\timeidx-1} ).
    \end{align*}
    
    Fix $\horizonn, \horizonnn: \horizonn < \horizonnn$. From Assumption \ref{policy_improvement}, we have 
    $\frac{\regret_{\horizonn}(\policy)}{\horizonn} > \frac{\regret_{\horizonnn}(\policy)}{\horizonnn}$. Expressing the regret by its definition, one can get 
    
    \begin{equation*}
        \frac{\mathbb{E} \left [ \sum_{\timeidx=1}^{\horizonn} \max_{\action \in \actset_{\timeidx}} \langle \theta_*, \action \rangle - \totalreward_{\horizonn} \right ]}{\horizonn} > \frac{\mathbb{E} \left [ \sum_{\timeidx=1}^{\horizonnn} \max_{\action \in \actset_{\timeidx}} \langle \theta_*, \action \rangle - \totalreward_{\horizonnn} \right ]}{\horizonnn},
    \end{equation*}
    
    and hence $\frac{\mathbb{E} [\totalreward_{\horizonnn}]}{\horizonnn} - \frac{\mathbb{E} [\totalreward_{\horizonn}]}{\horizonn} > 0$.
    
    Finally,
    
    \begin{align*}
        & \frac{\mathbb{E} [\totalreward_{\horizonnn}]}{\horizonnn} - \frac{\mathbb{E} [\totalreward_{\horizonn}]}{\horizonn} \\
        = & \frac{\sum_{\timeidx=1}^{\horizonnn} \sum_\action \langle \theta_*, \action \rangle \policy_\timeidx ( \action | \history_{\timeidx-1} )}{\horizonnn} - \frac{\sum_{\timeidx=1}^{\horizonn} \sum_\action \langle \theta_*, \action \rangle \policy_\timeidx ( \action | \history_{\timeidx-1} )}{\horizonn} > 0.
    \end{align*}
    
    The result is completed by rearranging the sums and using the definition of $\Bar{\policy}_{\horizonn}, \Bar{\policy}_{\horizonnn}$.
    
    2. For $\timeidx<\horizon$ we have $ \frac{ \regret_{\timeidx}(\policy)}{\timeidx} > \frac{ \regret_{\timeidx+1}(\policy)}{\timeidx+1} $. By subtracting $\frac{ \regret_{\timeidx+1}(\policy)}{\timeidx}$ from both sides we get:
    \begin{align*}
        \frac{ \regret_{\timeidx}(\policy) - \regret_{\timeidx+1}(\policy)}{t} & > \frac{\timeidx \regret_{\timeidx+1}(\policy) - (\timeidx+1) \regret_{\timeidx+1}(\policy)}{\timeidx(\timeidx+1)}, \\
        \frac{-(\max_{\action \in \actset_{\timeidx}} \langle \theta_*, \action \rangle - \reward_{\timeidx+1})}{\timeidx} & > \frac{-\regret_{\timeidx+1}(\policy)}{(\timeidx+1)\timeidx}, \\
        \max_{\action \in \actset_{\timeidx}} \langle \theta_*, \action \rangle - \reward_{\timeidx+1} & < \frac{\regret_{\timeidx+1}(\policy)}{\timeidx+1}, \\
        %\max_{\action \in \actset_{\timeidx}} \langle \theta_*, \action \rangle - \sum_\action \langle \theta_*, \action \rangle \policy_{\timeidx+1}(\action) & < \max_{\action \in \actset_{\timeidx}} \langle \theta_*, \action \rangle - \sum_\action \langle \theta_*, \action \rangle \Bar{\policy}_{\timeidx+1}(\action), \\
        \sum_\action \langle \theta_*, \action \rangle \policy_{\timeidx+1}(\action) & > \sum_\action \langle \theta_*, \action \rangle \Bar{\policy}_{\timeidx+1}(\action).
    \end{align*}
    
    Here, in forth step we used that $\frac{\sum_{\action \in \actset_{\timeidx}}\sum_{\timeidxx=1}^{\timeidx+1} \policy_\timeidxx (\action)}{\timeidx+1} = \sum_{\action \in \actset_{\timeidx}} \Bar{\policy}_{\timeidx+1}(\action)$.
\end{proof}

In what follows, we show that Assumption \ref{policy_improvement} is essential because if it does not hold, then the lower bounds in Theorems \ref{thm1}, \ref{main_thm:long}, and \ref{main_thm2:long} is greater than the upper bound: $\regret_\horizon(\policy) > \batchsize \regret_\numbatches(\policy).$

\begin{lemma}
    Suppose that the opposite to Assumption \ref{policy_improvement} holds for a given policy $\policy$. Then
    \begin{equation*}
        \regret_\horizon(\policy) > \batchsize \regret_\numbatches(\policy).
    \end{equation*}
\end{lemma}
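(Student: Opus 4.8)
The plan is to unwind the negation of Assumption \ref{policy_improvement} and instantiate it at a single, carefully chosen pair of horizons. The natural opposite of Sublinearity is that the per-round regret $\regret_\timeidx(\policy)/\timeidx$ is (strictly) nondecreasing in $\timeidx$ — i.e.\ the regret grows superlinearly — so that $\frac{\regret_{\horizonn}(\policy)}{\horizonn} < \frac{\regret_{\horizonnn}(\policy)}{\horizonnn}$ for all $1 \le \horizonn < \horizonnn \le \horizon$. I would state and prove the lemma under this reading, since it is the counterpart to ``Sublinearity'' and the one consistent with the surrounding discussion.

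First I would recall the standing convention $\horizon = \batchsize \numbatches$, so that $\numbatches = \horizon / \batchsize$. Since $\batchsize > 1$ this gives $\numbatches < \horizon$, and $\numbatches \ge 1$; hence the pair $(\horizonn,\horizonnn) = (\numbatches,\horizon)$ is admissible in the range $1 \le \horizonn < \horizonnn \le \horizon$ to which the (negated) assumption applies.

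Then I would apply the negated assumption to this pair, obtaining
\begin{equation*}
    \frac{\regret_\numbatches(\policy)}{\numbatches} < \frac{\regret_\horizon(\policy)}{\horizon}.
\end{equation*}
Multiplying both sides by $\horizon = \batchsize \numbatches > 0$ and cancelling the factor $\numbatches$ on the left yields $\batchsize\,\regret_\numbatches(\policy) < \regret_\horizon(\policy)$, which is precisely the claimed inequality.

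There is essentially no technical obstacle here: once ``the opposite of Assumption \ref{policy_improvement}'' is pinned down to the monotone (superlinear) statement, the lemma is a one-step rearrangement using $\horizon = \batchsize \numbatches$. The only point requiring care is the interpretation itself — the bare logical negation (existence of \emph{some} violating pair) would not let the argument go through for the specific pair $(\numbatches,\horizon)$ — so the proof should make explicit that it uses the strict superlinear form, which then shows that dropping Assumption \ref{policy_improvement} makes the lower bounds in Theorems \ref{thm1}, \ref{main_thm:long}, and \ref{main_thm2:long} exceed the common upper bound $\batchsize\,\regret_\numbatches(\policy)$.
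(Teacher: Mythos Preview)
Your proof is correct and follows the same route as the paper: instantiate the reversed sublinearity inequality at the pair $(\numbatches,\horizon)$ and use $\horizon=\batchsize\numbatches$ to rearrange. Your version is in fact cleaner than the paper's, whose displayed proof has a direction slip (it writes $\regret_\numbatches(\policy)/\numbatches \ge \regret_\horizon(\policy)/\horizon$ and literally concludes $\batchsize\regret_\numbatches(\policy) > \regret_\horizon(\policy)$); your explicit reading of ``opposite'' as the strict superlinear form is the intended one and yields the stated inequality.
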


\begin{proof}
    Suppose Assumption \ref{policy_improvement} does not hold for $\policy$ (e.g., it makes a lot of suboptimal choices). In that case, an online ``short" policy could perform better as it omits these suboptimal choices. Indeed, using Assumption \ref{policy_improvement} for horizons $\numbatches$ and $\horizon$, we have 
    
    \begin{equation*}
        \frac{\regret_\numbatches(\policy)}{\numbatches} \geq \frac{\regret_\horizon(\policy)}{\horizon},
    \end{equation*}
    
    and, therefore, multiplying by $\horizon$, and $\numbatches$ respectively and using $\horizon = \numbatches \batchsize$, we get 
    
    \begin{equation*}
        \batchsize \numbatches \regret_\numbatches(\policy)  > \numbatches \regret_\horizon(\policy).
    \end{equation*}
    
    Finally, dividing by $\numbatches$, we get 
    \begin{equation*}
        \batchsize \regret_\numbatches(\policy) > \regret_\horizon(\policy).
    \end{equation*}
\end{proof}

\subsection{Equivalence of Assumptions \ref{policy_improvement} and \ref{monotonicity}}
Here, we discuss a link between Assumptions \ref{policy_improvement} and \ref{monotonicity}. Generally, neither of the assumptions implies another. Nevertheless, there is a strong connection between these assumptions. 

On the one hand, Assumption \ref{monotonicity} may seem more restrictive, as it imposes the strictly increasing lower bound on the probability of choosing the optimal action, whereas Assumption \ref{policy_improvement} is satisfied as long as the ”rate” of increasing of the total regret decreases (which does not necessarily require the strictly increasing probability of choosing the optimal action over time). In contrast, Assumption \ref{monotonicity} only requires increasing lower bound, while \ref{policy_improvement} requires the decreasing rate of the actual total regret. In fact, the strictly increasing lower bound on the probability of choosing the optimal action (Assumption \ref{monotonicity}) is equivalent to the decreasing rate of the worst-case total regret (adjusted Assumption \ref{policy_improvement}):

\begin{align*}
    f \text{ -- strictly increasing } \Leftrightarrow \left( \underline{\regret_{\horizonn}(\policy)} \right ) / \horizonn > \left( \underline{\regret_{\horizonnn}(\policy)} \right ) / \horizonnn.
\end{align*}

Moreover, these assumptions are equivalent under certain conditions, summarized in the following lemma.

\begin{lemma}
    Consider a stochastic bandit problem with $\numofactions=2$ arms, then Assumptions \ref{policy_improvement} and \ref{monotonicity} are equivalent.
\end{lemma}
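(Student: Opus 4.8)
The goal is to show that for a 2-armed stochastic bandit, Assumption~\ref{policy_improvement} (sublinearity of the actual total regret) and Assumption~\ref{monotonicity} (existence of a strictly increasing monotonic lower bound on the probability of choosing the optimal arm) are equivalent. The key simplification in the $\numofactions=2$ case is that the regret has an especially clean form: writing $\suboptgap$ for the suboptimality gap and letting $p_\timeidx := \probmeasure(\Action_\timeidx = \bestarmidx)$ be the (expected, over histories) probability that the policy plays the optimal arm at round $\timeidx$, we have $\regret_\timeidx(\policy) = \suboptgap \sum_{\timeidxx=1}^{\timeidx}(1-p_\timeidxx)$. So the whole statement should reduce to a statement relating the behavior of the sequence $(1-p_\timeidx)$ to the behavior of its Cesàro averages.

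The plan is to prove the two directions separately. First I would observe that Assumption~\ref{policy_improvement}, namely $\regret_{\horizonn}(\policy)/\horizonn > \regret_{\horizonnn}(\policy)/\horizonnn$ for $\horizonn < \horizonnn$, translates (after dividing by $\suboptgap$) into: the Cesàro average $\frac{1}{\timeidx}\sum_{\timeidxx \le \timeidx}(1-p_\timeidxx)$ is strictly decreasing in $\timeidx$. A standard fact about Cesàro averages is that if the running average of a sequence is strictly decreasing, then each new term lies strictly below the previous average; equivalently the ``incremental'' quantity $1-p_\timeidx$ is, in an averaged sense, pushed downward, which forces $p_\timeidx$ upward. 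To turn this into Assumption~\ref{monotonicity} I would \emph{define} the lower bound directly: set $f_{\action}(\timeidx)$ for the suboptimal arm $\action$ to be (a nonincreasing envelope of) $1-p_\timeidx$, so that $f(\timeidx) = 1 - f_{\action}(\timeidx) = p_\timeidx$ up to taking a monotone envelope; then verify (i) $f_\action$ nonincreasing and (iii) $f$ strictly increasing follow from the strict decrease of the averages together with the elementary lemma that a strictly decreasing Cesàro average forces the underlying (enveloped) sequence to be strictly decreasing. Condition (ii) holds trivially since $f_\action(\timeidx)$ is exactly the probability of the suboptimal arm (or an upper envelope of it). For the converse, given the monotonic lower bound $f$ from Assumption~\ref{monotonicity}, I would use property (iii)—$f$ strictly increasing—to show $p_\timeidx \ge f(\timeidx)$ is a strictly increasing lower bound, hence $1-p_\timeidx$ is, up to the envelope, strictly decreasing, and then invoke the reverse Cesàro fact: if a sequence is strictly decreasing then its running averages are strictly decreasing, which is exactly Assumption~\ref{policy_improvement}.

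The main obstacle I anticipate is the gap between ``the sequence itself is monotone'' and ``the running average is monotone''—these are not literally equivalent for arbitrary sequences (a strictly decreasing average does not force each term to be below the previous term individually, only below the previous average). So the careful part is choosing the right formulation of Assumption~\ref{monotonicity}'s function $f$: it is a \emph{lower bound} (an envelope), not the exact probability, and this slack is precisely what absorbs the discrepancy. Concretely, in the forward direction I would set $f(\timeidx) := \min_{\timeidxx \le \timeidx} \big(\text{something derived from the averages}\big)$ so that monotonicity is automatic and the strict-increase property is inherited from the strict decrease of $\regret_\timeidx(\policy)/\timeidx$; one must then check this envelope is genuinely a lower bound for $p_\timeidx$, which uses $p_\timeidx \ge$ (previous average) $-$ (current average adjustment), i.e. an Abel-summation/telescoping identity for $\sum(1-p_\timeidxx)$. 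I would also need to handle the boundary convention ``unless $f(\timeidx)=1$'' (the case where the policy is already optimal), which is immediate. Everything else is routine manipulation of the closed-form 2-armed regret expression.
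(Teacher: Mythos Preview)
Your forward direction is essentially the paper's argument: both of you use the 2-arm regret decomposition $\regret_\timeidx(\policy) = \suboptgap\,\mathbb{E}[T_2(\timeidx)]$ and then define $f_2$ via the probability of choosing the suboptimal arm. You are in fact more careful than the paper here: you correctly flag that $\mathbb{E}[T_2(\timeidx)]/\timeidx = \tfrac{1}{\timeidx}\sum_{\timeidxx\le\timeidx}(1-p_\timeidxx)$ is a Ces\`aro average, not the round-$\timeidx$ probability $1-p_\timeidx$, whereas the paper simply writes that $\mathbb{E}[T_2(\horizonn)]/\horizonn$ ``can be interpreted as $\probmeasure(\Action_{\horizonn}=2)$'' and sets $f_2(\timeidx):=\probmeasure(\Action_\timeidx=2)$ without distinguishing the two. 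Your envelope construction is a reasonable device to absorb that slippage, and goes beyond what the paper actually does.

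Your converse direction, however, has a genuine gap. From Assumption~\ref{monotonicity} you only obtain $p_\timeidx \ge f(\timeidx)$ with $f$ strictly increasing; this does \emph{not} force $1-p_\timeidx$ (or any envelope of it) to be strictly decreasing, nor does it force the Ces\`aro averages $\tfrac{1}{\timeidx}\sum_{\timeidxx\le\timeidx}(1-p_\timeidxx)$ to be strictly decreasing. A sequence can sit above a rising floor while oscillating: take $p_1=0.5$, $p_2=1$, $p_3=0.6$ with $f(1)=0.4$, $f(2)=0.5$, $f(3)=0.55$; the bound $p_\timeidx\ge f(\timeidx)$ holds, yet the running average of $1-p_\timeidx$ is $0.5,\,0.25,\,0.3$, so Assumption~\ref{policy_improvement} fails between $\horizonn=2$ and $\horizonnn=3$. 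The ``reverse Ces\`aro fact'' you invoke requires the \emph{sequence itself} to be strictly decreasing, and a strictly increasing lower bound alone cannot deliver that. The paper's own proof, for what it is worth, only writes out the forward implication and never argues the converse explicitly, so there is nothing there to fill this hole.
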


\begin{proof}
    Recall that in 2-armed stochastic bandit $\theta_* = (\armval_\action)_{\action \in [1,2]}$ - a vector of true rewards for each action (arm). Without loss of generality, assume that arm 1 is optimal. Define a number of times policy $\policy$ has played an arm $\action$ by timestep $\timeidx$ as $T_{\action}(\timeidx) = \sum_{\timeidxx=1}^{\timeidx} \mathbb{I} \{\Action_\timeidx = \action\}$. Using the regret decomposition lemma (Lemma 4.5, \cite{lattimore_szepesvari_2020}), we can derive:
    
    \begin{align*}
        \frac{\regret_{\horizonn}(\policy)}{\horizonn} > \frac{\regret_{\horizonnn}(\policy)}{\horizonnn} \Leftrightarrow \frac{ \mathbb{E} \left [ T_{2}(\horizonn) \right ] }{\horizonn} > \frac{ \mathbb{E} \left [ T_{2}(\horizonnn) \right ] }{\horizonnn}
    \end{align*}
    
    From frequentist probability perspective, $\frac{ \mathbb{E} \left [ T_{2}(\horizonn) \right ] }{\horizonn}$ can be interpreted as probability of choosing arm 2: $\mathbb{P} \left ( \Action_{\horizonn} = 2 \right )$. As such, $f_{2}(t):=\mathbb{P} \left ( \Action_{\timeidx} = 2 \right )$ is an decreasing function and $f(t):=1 - f_{2}(t)$ is an increasing function. Clearly, $\policy_{\timeidx}(2 | \history_{\timeidx-1}) \leq f_{2}(t)$. Thus, all the conditions of Assumption \ref{monotonicity} are satisfied.
\end{proof}

While everything we discussed here holds for Assumption \ref{instance_dependent_monotonicity} as well, we provide a more concrete discussion of Assumption \ref{instance_dependent_monotonicity} in the next section.

\subsection{Illustration of Assumptions \ref{instance_dependent_monotonicity}}
\label{app:assum2.4}
Here, we provide a specific example of a policy that satisfies Assumption \ref{instance_dependent_monotonicity}. We consider the stochastic multi-armed bandit
problem and examine the UCB family of algorithms. 

\begin{lemma}[Lemma 1.2, \cite{Lecture_shipra}]
    Let $T_{\action}(\timeidx)$ be the number action $\action$ is chosen by UCB algorithm run on instance $\theta_* = (\armval_1,...,\armval_{\numofactions})$ of the stochastic multi-armed bandit problem. Then, for any action $\action \neq \arg \max_{\action} \armval_\action$, 
    
    \begin{equation*}
        \mathbb{E} \left [ T_{\action}(\timeidx) \right ] \leq \frac{4 \ln{\timeidx}}{\Delta^2_{\action}} + 8,
    \end{equation*}
    
    where $\Delta_{\action} = \max_{\action} \armval_\action - \armval_{\action}$.
\end{lemma}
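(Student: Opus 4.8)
The statement is the classical pull-count guarantee for UCB, and the plan is to run the standard optimism argument: a suboptimal arm $\action$ is selected at a round only when its confidence-inflated index exceeds that of the optimal arm, and I show this can happen at most $\approx 4\ln t/\Delta_{\action}^2$ times once the confidence width of $\action$ has shrunk below $\Delta_{\action}/2$, plus a bounded number of additional rounds on which the confidence bounds themselves fail.

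First I fix the index. Consistent with the confidence radius used in Algorithm \ref{alg3:check_phase}, at round $s$ the algorithm plays $\arg\max_{\action} \big( \hat{\armval}_{\action}(s) + \sqrt{\ln s / T_{\action}(s)} \big)$, where $\hat{\armval}_{\action}(s)$ is the empirical mean of the $T_{\action}(s)$ rewards observed for $\action$ before round $s$. I set the threshold $u := \lceil 4\ln t / \Delta_{\action}^2 \rceil$, chosen precisely so that $T_{\action}(s) \ge u$ forces $2\sqrt{\ln s / T_{\action}(s)} \le \Delta_{\action}$ for every $s \le t$. Writing $T_{\action}(t) = 1 + \sum_{s} \mathbb{I}\{\Action_s = \action\}$ and observing that playing $\action$ at round $s$ requires $\hat{\armval}_{\action}(s) + \sqrt{\ln s/T_{\action}(s)} \ge \hat{\armval}_{\bestarmidx}(s) + \sqrt{\ln s/T_{\bestarmidx}(s)}$, I argue that at least one of three events must then hold: (i) the optimal arm is under-estimated, $\hat{\armval}_{\bestarmidx}(s) + \sqrt{\ln s/T_{\bestarmidx}(s)} \le \bestarm$; (ii) arm $\action$ is over-estimated beyond its radius, $\hat{\armval}_{\action}(s) \ge \armval_{\action} + \sqrt{\ln s/T_{\action}(s)}$; or (iii) $T_{\action}(s) < u$. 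Indeed, if all three fail then the chain $\hat{\armval}_{\action}+\text{radius} \le \armval_{\action}+2\,\text{radius} < \bestarm \le \hat{\armval}_{\bestarmidx}+\text{radius}$ contradicts the selection of $\action$. Event (iii) can hold on at most $u$ rounds, which contributes the leading $4\ln t/\Delta_{\action}^2$ term.

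Finally I bound events (i) and (ii) by concentration. By the $1$-subGaussian reward assumption of Section \ref{sec:lin_bandits} (in this finite-armed setting the rewards are in addition bounded, so Hoeffding applies), a union bound over the candidate values $m \in \{1,\dots,s\}$ of the pull count controls the probability that some empirical mean deviates from its true mean by more than the corresponding radius; summing the resulting per-round failure probabilities over all rounds up to $t$, together with the initialization rounds, gives a convergent series whose total is absorbed into the additive constant $8$. Combining the at most $u$ rounds from event (iii) with this constant yields $\mathbb{E}[T_{\action}(t)] \le u + 8 \le 4\ln t/\Delta_{\action}^2 + 8$.

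The main obstacle is the concentration step, because the pull count $T_{\action}(s)$ appearing in the denominator of the radius is itself random and statistically dependent on $\hat{\armval}_{\action}(s)$, so a single tail bound cannot be applied to $\hat{\armval}_{\action}(s)$ directly. The standard remedy is the union bound over all candidate values $m$ of the pull count (a fixed-sample-size / peeling argument), and the delicate bookkeeping is to verify that the chosen radius $\sqrt{\ln s / T_{\action}(s)}$ makes the resulting sum over rounds summable, so that it contributes only the clean additive constant rather than an extra logarithmic factor.
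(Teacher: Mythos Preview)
The paper does not supply its own proof of this lemma; it is quoted verbatim from external lecture notes (the citation to \texttt{Lecture\_shipra}) and used only to illustrate that UCB satisfies Assumption~\ref{instance_dependent_monotonicity}. So there is no in-paper argument to compare against.

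Your outline is the standard Auer--Cesa-Bianchi--Fischer decomposition and is the right route: split on whether $T_{\action}(s)$ has exceeded the threshold $u$, and on the complement show that pulling $\action$ forces a confidence-interval failure for either the optimal arm or for $\action$ itself, then control those failures by a union bound over the possible values of the pull count. One point to tighten: with the radius $\sqrt{\ln s / T_{\action}(s)}$ you took from Algorithm~\ref{alg3:check_phase}, the Hoeffding/subGaussian tail at a fixed sample size $m$ is only $s^{-2}$ (bounded rewards) or $s^{-1/2}$ ($1$-subGaussian), and after the union over $m\le s$ the per-round failure probability is $s^{-1}$ or worse, which does not sum to a constant. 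The cited lecture-note version of UCB uses a slightly larger exploration bonus (a factor $\sqrt{2}$ in the radius, equivalently $2\ln s$ under the square root), which is exactly what makes the series converge and yields the clean additive $8$. You flagged this bookkeeping as the delicate step; just be explicit that the lemma's constants correspond to that version of the index, not the one appearing in Algorithm~\ref{alg3:check_phase}.
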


From frequentist probability perspective, we can think of $\frac{\mathbb{E} \left  [ T_{\action}(\timeidx) \right ]}{t}$ as probability of choosing arm $\action$ and, thus, we can set $f_{\theta_*, \action}(t) = \frac{4 \ln(\timeidx+1)}{t\Delta^2_{\action}} + 8/t$ which is a nondecreasing upper bound of the probability of choosing suboptimal arm $\action$ (i.e., points (i) and (ii) are satisfied). As Assumption \ref{instance_dependent_monotonicity} suggests, define $f_{\theta_*}$ as $f_{\theta_*, \action}(t) = 1 - \sum_{\action \neq \arg \max \armval_\action} f_{\theta_*, \action}(t)$. Note that $f_{\theta_*}(t)$ satisfies points (iii) and (iv) of Assumption \ref{instance_dependent_monotonicity}, as (iii) $f_{\theta_*, \action}(t)$ is actually decreasing function for all $\action$ and all $\timeidx>0$; and (iv) $f_{\theta_*, \action}(t)$ is increasing in the instance argument (as described in Assumption \ref{instance_dependent_monotonicity}). Thus, we get a monotonical lower bound of the probability of choosing the optimal arm.

\section{Empirical analysis}
\label{empirical}
Our theoretical results posit that algorithms \ref{alg2:dealayed_batch_learn} and \ref{alg1:dealayed_batch_learn} are run optimally, i.e, utilizing monotonic lower bound $f$, which can be indeed achieved (see Sections \ref{app:assum2.1} - \ref{app:assum2.4}). However in our experiments it suffices to use a general batch learning specification described in Section \ref{sec:batch}. In fact, the analysis we conduct in this section provides even stronger as the bounds depicted in figures \ref{fig:regret}, \ref{cr-plot} hold for the actual values rather than the worst-case values.

We perform experiments on two different applications: simulated stochastic bandit environments; and a contextual bandit environment, learned from the logged data in an online marketing campaign. We examine the effect of batch learning on Thompson Sampling (TS) and Upper Confidence Bound (UCB) policies for the stochastic problems, and linear Thompson Sampling (LinTS) \citep{NIPS2011_e53a0a29} and linear Upper Confidence Bound (LinUCB) \citep{LinUCB_2010} for the contextual problem.

\textbf{Simulated environments.} We present some simulation results for the Bernoulli bandit problem. In this simulation, the best action has a reward probability of $0.5$ and $\numofactions - 1$ have  a probability of $0.5 - \Delta$. In total, we consider six environments for $K \in \{2,5,10\}$ and $\Delta \in \{ 0.1, 0.02 \}$.  Figure \ref{fig:regret} shows the regret as a function of the batch size $\batchsize$ for various settings and policies.

\textbf{Real data.} 
%We also consider batch learning in a marketing campaign on the KPN \footnote{a dutch telecommunications provider} logged dataset. KPN 
We also consider batch learning in a marketing campaign on the logged dataset from our industrial partner. The company 
has recently used three different campaigns to sell an extra broadband subscription to their customers. In the current dataset, a randomly selected set of customers received randomly one of the three campaigns. The data contains a sample of a campaign selection from October 2019 until June 2020 combined with customer information.  We adopt an unbiased offline evaluation method \citep{Li_2011} to compare various bandit algorithms and batch size values. We use conversion rate (CR) as the metric of interest, defined as the ratio between the number of successful interactions and the total number of interactions. To protect business-sensitive information, we only report relative conversion rate; therefore, Figure \ref{cr-plot} demonstrates the CR returned by the off-policy evaluation algorithm relatively to the online performance.

\iffalse
\begin{figure}
\centering     %%% not \center
\subfigure[2-arm environments performance.]{\label{fig:b}\includegraphics[width=60mm]{}}
\subfigure[4-arm environments performance.]{\label{fig:a}\includegraphics[width=60mm]{}}
\caption{Empirical regret performance by batches. The plots are averaged over 500 repetitions.}
\label{regret-plot}
\end{figure}
\fi

\textbf{Results.} \footnote{The source code of the experiments can be found in \url{https://github.com/danilprov/batch-bandits}.}
%\footnote{The source code of the experiments can be found in \url{https://github.com/danilprov/batch-bandits}.} 
Figure \ref{fig:regret} represents the effect of batch size across three dimensions: number of action $\numofactions$, suboptimal gap value $\Delta$, and the policy. As expected, the regret has an upward trend as batch size increases for all settings. Taking a look at the environment parameters, $\numofactions$ and $\Delta$, we see that the lower the difficulty of the environment (i.e., the higher the suboptimality gap or number of actions), the stronger the impact of batching.

While the exact behavior of the batch learning depends majorly on the exact policy, we still can see a clear difference between randomized policies (TS and LinTS) and deterministic ones (UCB and LinUCB). \footnote{A randomized policy returns a distribution over actions as an output, and the action to take is then to be chosen from this distribution. A deterministic policy returns a degenerate distribution over actions, i.e., an action to take is defined deterministically.} Indeed, figure \ref{fig:regret} shows that TS is much more robust to the impact of batching, whereas the UCB algorithm suffers notable deterioration. The results for the real data also confirm this fact: from Figure \ref{cr-plot} we observe that the impact of batching is milder for LinTS than for LinUCB in the contextual problem.

It is important to note that both experiments demonstrate results consistent with the theoretical analysis conducted in Sections \ref{sec:warm-up}. As the upper bound in Theorem \ref{thm1} suggests, the performance metric (i) reacts evenly to the increasing/decreasing batch size and (ii) doesn't violate the imposed bounds.
%Thus, the conducted analysis guarantees that the batch size choice should be based on computational capabilities or other features of the problem. 
%Finally, although it was not an initial goal, we recommend resorting to randomized policies when it becomes necessary to learn in batches.

\begin{figure}%
\centering
\subfigure[][TS, $\Delta=0.1$]{%
\label{fig:ex3-a}%
\includegraphics[height=1.17in]{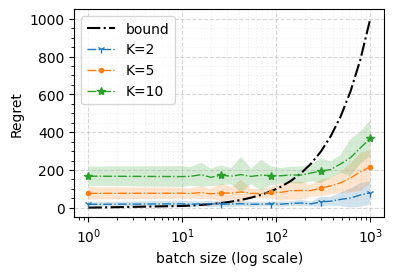}}%
\hspace{8pt}%
\subfigure[][UCB, $\Delta=0.1$]{%
\label{fig:ex3-b}%
\includegraphics[height=1.17in]{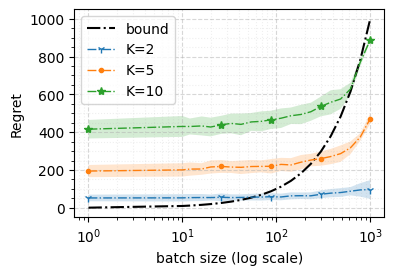}}
\\
\subfigure[][TS, $\Delta=0.02$]{%
\label{fig:ex3-c}%
\includegraphics[height=1.2in]{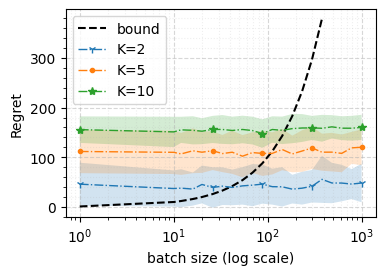}}%
\hspace{8pt}%
\subfigure[][UCB, $\Delta=0.02$]{%
\label{fig:ex3-d}%
\includegraphics[height=1.2in]{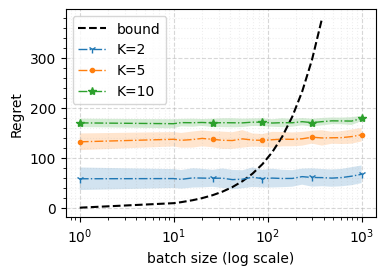}}%
\caption[A set of four subfigures.]{Empirical regret performance by TS and UCB policies by batches for $K \in \{2,5,10\}$ and $\Delta \in \{ 0.1, 0.02 \}$. The plots are averaged over 100 repetitions. The black line is the upper bound from \eqref{main_thm}  shifted such that it goes through the origin.}%
\label{fig:regret}%
\end{figure}

\begin{figure}%
\centering
\subfigure[][LinTS]{%
\label{fig:lints}%
\includegraphics[height=1.17in]{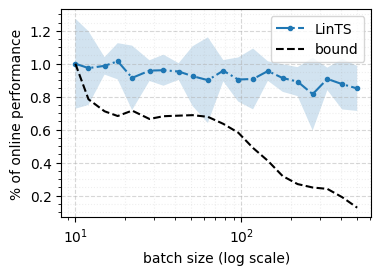}}%
\hspace{8pt}%
\subfigure[][LinUCB]{%
\label{fig:linucb}%
\includegraphics[height=1.17in]{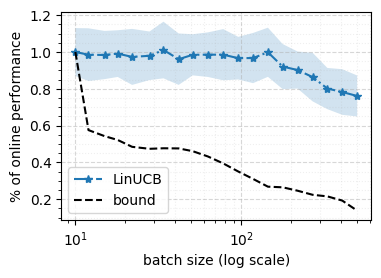}}
\caption[Caption for CR]{Empirical conversion rate of LinTS and LinUCB policies by batches for the real dataset. The plots are averaged over 20 repetitions. The black line is the lower \footnotemark bound from \eqref{main_thm}  normalized relatively to the online behavior.}
\label{cr-plot}
\end{figure}

\footnotetext{Note, since the regret is unknown for the real life data as it measures the performance relatively to the oracle behavior, we plot reward (conversion rate). As such, the upper bound from \eqref{main_thm} for regret  becomes the lower bound for reward.}

\iffalse
\begin{figure}
\centering     %%% not \center
{\includegraphics[width=60mm]{}}
\caption{Empirical conversion rate by LinTS and LinUCB by batches for the real dataset. The plots are averaged over 20 repetitions.}
\label{cr-plot}
\end{figure}

\begin{table}
  \caption{Description of the environments.}
  \label{env-table}
  \centering
  \begin{tabular}{llll}
    \toprule
    \multicolumn{2}{c}{2-arm environments} & \multicolumn{2}{c}{4-arm environments} \\
    \midrule
    Name     & Rewards     & Name & Rewards  \\
    \cmidrule(r){1-2} 
    \cmidrule(r){3-4}
    $env1$ & [0.7, 0.5]  & $env4$ &   [0.35, 0.18, 0.47, 0.61]   \\
    $env2$ & [0.7, 0.4]  & $env5$ &   [0.40, 0.75, 0.57, 0.49]   \\
    $env3$ & [0.7, 0.1]  & $env6$ &   [0.70, 0.50, 0.30, 0.10]   \\
    \bottomrule
  \end{tabular}
\end{table}
\fi

\section{Conclusion}
\label{sec:conclusion}
%Batch learning is an integral part of any learning system, and understanding its impact is important. In this study, we have presented a systematic approach for batch learning in stochastic linear bandits. Formulating the problem from a more practically relevant perspective, we have shown the actual effect of batch learning by conducting a comprehensive theoretical analysis, which is confirmed by our strong empirical results.

We have presented a systematic approach for batch learning in stochastic linear bandits. The contribution of this paper is twofold. First, we have introduced a new perspective of batched bandits emphasizing the importance of the batch size effect. In contrast, most of the work on batched bandits assumes that the batch size is a parameter to be optimized. Second, we have shown the actual effect of batch learning by conducting a comprehensive theoretical analysis, which is confirmed by our strong empirical results. Practically speaking, we have investigated one component of the performance-computational cost trade-off and demonstrated that it deteriorates gradually depending on the batch size. Thus, to find a suitable batch size, practitioners should take the necessary steps to estimate the second component (engineering costs) based on computational capabilities.

%An interesting direction concerning future work would be to consider batch learning under a non-stationary setting. We anticipate that the relative position of batches and non-stationary points will play a significant role in the analysis of the sequential batch learning problem. Also, as we mentioned earlier, randomized policies seem to be more robust to batch learning than deterministic policies. Therefore, an interesting future direction is to provide a theoretical comparison of randomized and deterministic policies.

%An interesting direction concerning future work would be to consider batch learning with varying decision rules within batches. That is, even though the agent does not receive any information within a batch, it might leverage two kinds of information to make more intelligent decisions: the actual experience (the information about the past batches) and the simulated experience (experience simulated by a model of the environment and combined with the agent's behavior in the current batch). 

An interesting direction concerning future work would be to consider batch learning with varying decision rules within batches. That is, even though the agent does not receive any information within a batch, it might leverage two kinds of information to make more intelligent decisions. The first and natural recourse of information is the actual experience from the past batches. The second source of information is simulated experience: experience simulated by a model of the environment and combined with the agent's behavior in the current batch. Studying how to integrate actual knowledge and simulated experience to perform more optimally within a batch would potentially lead to tighter bounds between batch and online learning.

\section{Acknowledgements}
%We would like to thank the anonymous reviewers for their thoughtful and helpful suggestions and comments.
This project is partially financed by the Dutch Research Council (NWO) and the ICAI initiative in collaboration with KPN, the Netherlands.

\newpage
{\footnotesize
\bibliography{main}}
\bibliographystyle{IEEEtran}

\end{document}